\def\eqref#1{equation~\ref{#1}}
\def\1{\bm{1}}
\def\vc{{\mathbf{c}}}
\def\ve{{\mathbf{e}}}
\def\vp{{\mathbf{p}}}
\def\vq{{\mathbf{q}}}
\def\vx{{\mathbf{x}}}
\def\vz{{\mathbf{z}}}
\def\mC{{\mathbf{C}}}
\def\mE{{\mathbf{E}}}
\def\mH{{\mathbf{H}}}
\def\mI{{\mathbf{I}}}
\def\mW{{\mathbf{W}}}
\def\mX{{\mathbf{X}}}
\DeclareMathAlphabet{\mathsfit}{\encodingdefault}{\sfdefault}{m}{sl}
\SetMathAlphabet{\mathsfit}{bold}{\encodingdefault}{\sfdefault}{bx}{n}
\def\gB{{\mathcal{B}}}
\def\gC{{\mathcal{C}}}
\def\gD{{\mathcal{D}}}
\def\gE{{\mathcal{E}}}
\def\gF{{\mathcal{F}}}
\def\gG{{\mathcal{G}}}
\def\gL{{\mathcal{L}}}
\def\gM{{\mathcal{M}}}
\def\gN{{\mathcal{N}}}
\def\gO{{\mathcal{O}}}
\def\gP{{\mathcal{P}}}
\def\gQ{{\mathcal{Q}}}
\def\gR{{\mathcal{R}}}
\def\gT{{\mathcal{T}}}
\def\gV{{\mathcal{V}}}
\def\gZ{{\mathcal{Z}}}
\def\sS{{\mathbb{S}}}
\newcommand{\E}{\mathbb{E}}
\newcommand{\R}{\mathbb{R}}
\DeclareMathOperator*{\argmin}{arg\,min}
\theoremstyle{plain}
\newtheorem{theorem}{Theorem}[section]
\theoremstyle{definition}
\newtheorem{definition}[theorem]{Definition}
\theoremstyle{remark}
\newtheorem{remark}[theorem]{Remark}
\definecolor{Gray}{gray}{0.95}
\definecolor{Green}{HTML}{A4E28E} % Green
\definecolor{LightGreen}{HTML}{E0F6DE} % LightGreen
\definecolor{Blue}{HTML}{BFC0FF} % Blue
\definecolor{LightBlue}{HTML}{E7E6FF} % LightBlue
\title{GFT: Graph Foundation Model with \\ Transferable Tree Vocabulary}
\author{
  Zehong Wang  \\
  University of Notre Dame\\
  Indiana, USA\\
  \texttt{zwang43@nd.edu} \\
  \And
  Zheyuan Zhang \\
  University of Notre Dame\\
  Indiana, USA\\
  \texttt{zzhang42@nd.edu} \\
  \AND
  Nitesh V Chawla \\
  University of Notre Dame\\
  Indiana, USA\\
  \texttt{nchawla@nd.edu} \\
  \And
  Chuxu Zhang$^*$\\
  University of Connecticut \\
  Connecticut, USA \\
  \texttt{chuxu.zhang@uconn.edu} \\
  \And
  Yanfang Ye\thanks{Corresponding Authors.} \\
  University of Notre Dame\\
  Indiana, USA\\
  \texttt{yye7@nd.edu} \\
}
\begin{document}

\maketitle

\begin{abstract}
  Inspired by the success of foundation models in applications such as ChatGPT, as graph data has been ubiquitous, one can envision the far-reaching impacts that can be brought by Graph Foundation Models (GFMs) with broader applications in the areas such as scientific research, social network analysis, drug discovery, and e-commerce. Despite the significant progress of pre-trained graph neural networks, there haven’t been GFMs that can achieve desired performance on various graph-learning-related tasks. Building GFMs may rely on a vocabulary that encodes transferable patterns shared among different tasks and domains. Unlike image and text, defining such transferable patterns for graphs remains an open question. In this paper, we aim to bridge this gap by rethinking the transferable patterns on graphs as computation trees -- i.e., tree structures derived from the message-passing process. Based on this insight, we propose a cross-task, cross-domain graph foundation model named \textsc{GFT}, short for \textsc{\underline{G}}raph \underline{\textsc{F}}oundation model with transferable \underline{\textsc{T}}ree vocabulary. By treating computation trees as tokens within the transferable vocabulary, \textsc{GFT} improves model generalization and reduces the risk of negative transfer. The theoretical analyses and extensive experimental studies have demonstrated the transferability of computation trees and shown the effectiveness of \textsc{GFT} across diverse tasks and domains in graph learning. The open source code and data are available at \url{https://github.com/Zehong-Wang/GFT}.

\end{abstract}

\section{Introduction}

Foundation models such as Large Language Models (LLMs) and Large Vision Models (LVMs) keep reshaping our view of the world \citep{bommasani2021opportunities,yuan2021florence,moor2023foundation,zhou2023comprehensive,mao2024graph}. These models are designed to be general-purpose, adaptable across various tasks and domains through fine-tuning or prompting, such as GPT-4 \citep{achiam2023gpt} in Natural Language Processing (NLP) and Sora \citep{liu2024sora} in Computer Vision (CV). Research attributes the success of foundation models to the uniformity of tasks and a general vocabulary that defines basic transferable patterns across tasks \citep{yu2023language,touvron2023llama,zhou2023comprehensive,bai2023sequential,mao2024graph}. For example, LLMs \citep{achiam2023gpt,zhou2023comprehensive} treat language tasks as question-answering or next-word prediction and deconstruct sentences using a word vocabulary. Similarly, LVMs \citep{yuan2021florence,yu2023language,bai2023sequential} reformulate image tasks as image question-answering, converting images into discrete tokens using a vision vocabulary. Inspired by the success of LLMs and LVMs, as graph-structured data (e.g., citation networks, social networks, computer networks, molecular structures, and recommender systems) have become ubiquitous, one can envision the far-reaching real-world impacts that can be brought by pre-trained Graph Foundation Models (GFMs).

Although there has been significant progress of pre-trained Graph Neural Networks (GNNs), there haven't been GFMs that can achieve desired performance on a wide range of graph-learning-related tasks. Unlike CV and NLP, as graphs represent complex, non-Euclidean relationships among entities \citep{wu2020comprehensive,ma2023hypergraph,zhang2019heterogeneous,qian2024dual,zhang2024diet}, a grand challenge of building GFMs lies in identifying transferable patterns across graphs \citep{mao2024graph,xia2024opengraph,he2024unigraph}. There have been extensive studies aiming to tackle this challenges, which can mainly be categorized into two groups: (1) Utilizing graphon theory: \citet{ruiz2020graphon} provide theoretical evidence of transferability between two graphs generated from the same graphon. \citet{cao2023pre} further extend these findings by both empirically and theoretically analyzing graph transferability in pre-training and fine-tuning scenarios. Despite these theoretical guarantees, the stringent assumptions of graphon theory often prove difficult to satisfy in real-world, cross-domain datasets \citep{levie2021transferability}, thus limiting its applicability in defining transferable graph vocabularies. (2) Exploring graph transferability using subgraph structures \citep{zhu2021transfer,qiu2020gcc,mao2024graph}: \citet{zhu2021transfer} demonstrate that the transferability between graphs is linked to the ego-graph patterns of individual nodes and establish a stability bound using the graph Laplacian. They suggest that localized subgraphs could serve as transferable patterns within graph vocabularies. Building on this finding, \citet{sun2023all} develop a GFM by reformulating graph tasks as subgraph classification, enabling a single model to be applied to diverse tasks. \citet{huang2023prodigy,liu2024one} expand GFMs to cross-domain scenarios by unifying the node feature space across different graphs through LLMs \citep{reimers2019sentence,touvron2023llama}. Despite these successes, the process of explicit subgraph extraction remains time and memory intensive \citep{huang2023prodigy}. More importantly, numerous studies such as \citep{garg2020generalization,chen2020can,morris2023wl,zhang2024beyond} show that message-passing GNNs \citep{kipf2017semisupervised,hamilton2017inductive,gilmer2017neural} fail to detect critical substructures or motifs within subgraphs, reducing the feasibility of using subgraphs to define graph vocabularies.

\textit{\textbf{How to identify a vocabulary that can encode transferable patterns shared among different tasks and domains for the construction of GFMs?}} In this paper, we aim to address the limitations of existing works by answering this question. Specifically, based on message-passing mechanism of GNNs, we have observed that the learned embeddings of each node can be essentially captured in the form of its computation tree. Based on this insight, we bridge the research gap by rethinking the transferable patterns on graphs as computation trees -- i.e., subtree structures derived from the message-passing process. Using computation tree as a transferable pattern across graphs will bring three primary advantages: (1) \textit{Efficiency}: As the extraction and encoding of computation trees are integrated within a single message-passing GNN process \citep{garg2020generalization}, it eliminates the need for the explicit subgraph extraction for GFMs \citep{huang2023prodigy,liu2024one}. (2) \textit{Expressiveness}: Since computation trees are capable of capturing localized patterns \citep{morris2019weisfeiler}, it's able to represent a graph as a multiset of computation trees \citep{gupta2024mirage}. (3) \textit{Learnability}: As the information of computation trees is completely captured by message-passing GNNs, it can tackle the issue that certain motifs within subgraphs remain elusive. We theoretically investigate the transferability of computation trees and empirically demonstrate a strong correlation between computation tree similarity and transfer learning performance across various graphs.

Based on the key idea above, by treating computation trees as graph vocabulary tokens, we develop a cross-task, cross-domain graph foundation model -- namely \textsc{GFT} -- short for \textsc{\underline{G}}raph \underline{\textsc{F}}oundation model with transferable \underline{\textsc{T}}ree vocabulary. \textsc{GFT} consists of pre-training and fine-tuning phases, enabling it to handle datasets across different tasks and domains effectively. During pre-training, we introduce a computation tree reconstruction task to acquire generalized knowledge from cross-domain graphs. We obtain a discrete tree vocabulary of prototypical tree tokens by quantizing the embedding space of computation trees, which theoretically improves model generalization. In the fine-tuning phase, we utilize this learned tree vocabulary to unify various graph-related tasks into computation tree classification, thereby preventing negative transfer \citep{wang2019characterizing,wang2024tackling}. Extensive experimental results demonstrate the effectiveness of \textsc{GFT} in graph learning on cross-task and cross-domain datasets.

\section{Rethinking Transferable Patterns on Graphs}
\label{sec:tree}

\subsection{Transferability on GNNs}

Transferability refers to a model's capability to extract patterns from source tasks and apply this knowledge to enhance performance on related target tasks \citep{bengio2012deep,jiang2022transferability,wen24coarse}. Understanding transferable patterns is essential for developing graph foundation models. Early research focuses on analyzing transferability through the perspectives of graph spectrum \citep{levie2019transferability,levie2021transferability} and subgraphs/substructures \citep{zhu2021transfer}, defining transferability as model invariance to minor permutations on the graph. A more recent study \citep{mao2024graph} investigates the transferable vocabulary in graphs by identifying key substructures relevant to various tasks. For instance, they find that triadic closure, homophily, and heterophily are vital for node classification; local and global structural proximities are crucial for link prediction; and certain motifs \citep{zhang2024beyond}, such as triangles, $k$-cliques, and stars, serve as fundamental components for graph classification. Another line of research \citep{ruiz2020graphon, cao2023pre, sun2024fine} incorporates graphon theory to provide a theoretical basis for transferability. Specifically, \citet{ruiz2020graphon} establish a bound on the embeddings of two graphs sampled from the same graphon. \citet{cao2023pre} expand this to include pre-training and fine-tuning scenarios, assessing the distance between graphs based on their alignment within the graphon space. However, the stringent assumptions of graphon theory limit its practical application in the design of graph foundation models.

We identify two primary limitations in analyzing transferable patterns on graphs: (1) While certain domains \citep{zheng2023towards,wang2024tackling,li2024zerog,zhao2024all} or tasks \citep{zhu2021transfer,zheng2023you,galkin2023towards} exhibit transferable patterns, the challenge of identifying universally transferable substructures is difficult. (2) More critically, basic message-passing GNNs, constrained by the 1-WL test \citep{xu2018how,morris2019weisfeiler}, fail to recognize certain subgraphs (or motifs) \citep{garg2020generalization,chen2020can,zhang2024beyond}, such as stars, conjoint cycles, and $k$-cliques, as well as heterophily patterns \citep{zhu2020beyond}. This limitation in recognizing substructures impedes using subgraphs as transferable tokens in graph vocabulary \citep{sun2023all,huang2023prodigy,liu2024one}. More related works are elaborated in Appendix \ref{sec:related work}.

\subsection{Computation Tree as Transferable Pattern}

\begin{wrapfigure}{r}{0.3\linewidth}
  \centering
  \vspace{-15pt}
  \includegraphics[width=\linewidth]{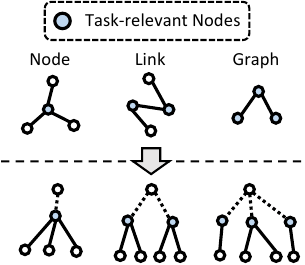}
  \caption{Graph tasks (top) and the corresponding computation trees (bottom). A virtual node can be added at the top to connect all task-relevant nodes, unifying different tasks as the tree-level task.}
  \label{fig:tree}
  \vspace{-15pt}
\end{wrapfigure}

In this paper, we rethink the transferable pattern in graphs as the computation tree --- a specialized subtree pattern that emerges from unfolding the message-passing process \citep{chuang2022tree}. This pattern is demonstrably effective at capturing critical localized information within the graph \citep{garg2020generalization, morris2019weisfeiler, xu2018how, chuang2022tree}. Treating computation trees as tokens within a graph vocabulary offers two distinct advantages: (1) computation trees preserve the essential structural information of the graph, which is learnable through message-passing GNNs, and (2) the computation tree structure occurs across various graph-based tasks. These tasks can be unified as computation tree classification by integrating a virtual node, as shown in Figure \ref{fig:tree}.

Before diving into transferability analysis, we first establish the necessary notations. Consider a graph $\mathcal{G} = (\mathcal{V}, \mathcal{E})$ composed of node set $\mathcal{V}$ and edge set $\mathcal{E}$. Each node $v \in \mathcal{V}$ is associated with a feature vector $\mathbf{x}_v \in \mathbb{R}^d$ and a computation tree $\mathcal{T}_v$ with $L$ layers. A GNN encoder $\phi$ processes these computation trees as inputs, producing embeddings for root nodes $\mathbf{z} = \phi(\mathcal{T}_v) \in \mathbb{R}^{d'}$.

\begin{definition}[Computation Trees \citep{chuang2022tree}]
  Given a graph $\mathcal{G} = (\mathcal{V}, \mathcal{E})$, define $\mathcal{T}_v^1 = v$ and $\mathcal{T}_v^L$ as the $L$-layer computation tree. This tree is constructed by recursively integrating the subtrees of neighborhoods. The multiset of $L$-layer computation trees on graph $\gG$ is denoted by $\gT_{\gG}^L := \{\gT_v^L\}_{v\in\gV}$.
\end{definition}

Figure \ref{fig:tree} demonstrates the construction of computation trees across various graph tasks, including node-, link-, and graph-level tasks. These trees capture essential localized subtree patterns within the graphs \citep{pearson1905problem, shervashidze2011weisfeiler, chuang2022tree}. If the $L$-layer computation trees for two nodes are similar, it indicates that these nodes share similar neighborhoods, suggesting they represent analogous phenomena \citep{levie2021transferability}. Thus, it is rational to assess transferability of computation trees by examining the stability of GNNs in producing analogous embeddings for similar trees \citep{ruiz2020graphon,levie2021transferability}.

% Transferability is typically described as the robustness of the encoder to small computation tree perturbations. 

\begin{theorem}[Transferability of Computation Tree ]
  \label{thm:tree}
  Given two $L$-layer computation trees $\gT_{v_1}, \gT_{v_2}$ derived from the graph $\gG$ and a GNN encoder $\phi$, the Euclidean distance between the tree embeddings $\Delta \triangleq \| \phi(\gT_{v_1}) - \phi(\gT_{v_2}) \|_2$ is bounded as follows:
  \begin{equation}
    \Delta \leq \gC_1 \| \vx_{v_1} - \vx_{v_2} \|_2 + \gC_2 \sum_{j \in \gN(v)} \Delta_{v_1, v_2, j}^{L-1}
    \leq 2 \gB_\vx (\gC_1 + \sum_{l=1}^{L} \gC_2^l D_l)
  \end{equation}
  where $\Delta_{v_1, v_2, j}^{L-1}$ represents the distance between the $(L-1)$-layer subtrees of the $j$-th children of nodes $v_1$ and $v_2$, $\gC_1, \gC_2$ are constants, and $\gB_\vx$ denote bounded norm of $\vx$. The variable $d_l$ indicates the number of children in the $l$-layer subtrees, and $D_l = d_l d_{l-1} ... d_1$.
\end{theorem}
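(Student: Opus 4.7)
The plan is to establish the bound by induction on the tree depth $L$, using the recursive definition of computation trees together with the Lipschitz continuity of the standard message-passing update of the form $\mathrm{UPDATE}(\vh_v, \mathrm{AGG}(\{\vh_u : u \in \gN(v)\}))$. The first inequality is the one-step recursion bound; the second is obtained by unrolling the recursion $L$ times and invoking the uniform feature norm bound $\gB_\vx$. Throughout, I will assume that the UPDATE and AGG modules are composed of linear maps with bounded operator norms and $1$-Lipschitz nonlinearities (e.g., ReLU), so both are Lipschitz; this is where the constants $\gC_1$ and $\gC_2$ come from.

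First I would unroll one layer of the GNN encoder $\phi$ on a depth-$L$ tree $\gT_v^L$. Writing $\phi(\gT_v^L) = \mathrm{UPDATE}(\vx_v, \mathrm{AGG}(\{\phi(\gT_u^{L-1}) : u \in \gN(v)\}))$ and applying the triangle inequality together with the assumed Lipschitz bounds, two terms appear: one controlling the discrepancy between the root features, scaled by the Lipschitz constant $\gC_1$ of UPDATE with respect to its root argument; and one controlling the discrepancy of the children subtrees, scaled by the product $\gC_2$ of the Lipschitz constants of UPDATE (w.r.t.\ its aggregated argument) and of AGG (taken as a sum-type operator so each child contributes one additive term). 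This yields the first claimed inequality $\Delta \leq \gC_1 \| \vx_{v_1} - \vx_{v_2} \|_2 + \gC_2 \sum_{j \in \gN(v)} \Delta_{v_1, v_2, j}^{L-1}$. A technical point is that the sum over children implicitly fixes an alignment between the neighborhoods of $v_1$ and $v_2$, which may have different sizes; one can either take the optimal matching or zero-pad to a common index set.

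Next I would iterate this inequality. Letting $B_\ell$ denote a worst-case subtree distance at depth $\ell$, the recursion becomes $B_\ell \leq \gC_1 \| \vx_{v_1} - \vx_{v_2} \|_2 + \gC_2 d_\ell \, B_{\ell-1}$, where $d_\ell$ is the branching factor at depth $\ell$. Unrolling from $L$ down to the base case $\phi(\gT_v^1)=\vx_v$ produces a geometric-like sum whose $\ell$-th term carries the coefficient $\gC_2^\ell D_\ell$, with $D_\ell = d_\ell d_{\ell-1}\cdots d_1$ counting the number of leaves reachable after $\ell$ layers of unfolding. Applying the uniform feature bound $\| \vx_u - \vx_{u'} \|_2 \leq 2\gB_\vx$ to every pair of corresponding nodes (including the roots) and collecting constants yields the second claimed inequality $\Delta \leq 2\gB_\vx ( \gC_1 + \sum_{\ell=1}^{L} \gC_2^\ell D_\ell )$.

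The main obstacle is bookkeeping rather than ideas. First, one must specify the alignment between the children of $v_1$ and $v_2$ carefully so that $\Delta_{v_1,v_2,j}^{L-1}$ is unambiguously defined; any reasonable matching only affects constants, not the form of the bound. Second, one must track how the Lipschitz constants of UPDATE and AGG combine across layers: different architectural choices (sum vs.\ mean aggregation, presence of normalization, attention weights, etc.) change the numerical values of $\gC_1$ and $\gC_2$ but not the overall shape of the recursion. Once these modeling choices are fixed at the outset, the remainder reduces to a routine induction on $L$.
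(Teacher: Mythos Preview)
Your proposal is correct and follows essentially the same approach as the paper: the paper also aligns the two trees by zero-padding (``non-sense branches''), applies Lipschitz bounds to the components of a GraphSAGE-like update $\sigma(\mW_1 \vx_v + \mW_2 \rho(\sum_j g(\cdot)))$ to obtain the one-step recursion, then bounds the sum over children by $d_\ell$ times the worst child discrepancy and unrolls with $\|\vx_u - \vx_{u'}\|_2 \leq 2\gB_\vx$. The only cosmetic difference is that the paper fixes the concrete architecture to name $\gC_1 = \gC_\sigma \gB_{\mW_1}$ and $\gC_2 = \gC_\sigma \gC_\rho \gC_g \gB_{\mW_2}$, whereas you keep UPDATE/AGG abstract; your remark that the recursion's feature-difference term must be reinterpreted at each level (not always $\vx_{v_1},\vx_{v_2}$) is exactly the bookkeeping the paper handles by immediately passing to the uniform bound $2\gB_\vx$.
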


\begin{proof}
  All proofs in the paper are detailed in Appendix \ref{sec:proof}.
\end{proof}

\begin{remark}
  Theorem \ref{thm:tree} derives a recursive bound for computation tree similarity. In particular, the distance between two computation trees is closely correlated to the similarity of their subtrees, where higher subtree similarity results in a closer distance. This suggests that computation trees with similar structures are likely to have similar embeddings, which enhances their transferability \citep{zhu2021transfer,jiang2022transferability,ruiz2020graphon}. This aligns with our empirical observations that higher computation tree similarity between two graphs leads to improved transferability.
\end{remark}

\textbf{Supportive Observations --- Synthetic Graphs.} Figure \ref{fig:syn graph transferability 1} shows that high computation tree similarity between graphs correlates with improved transfer learning performance on synthetic graphs (Figure \ref{fig:syn}). Specifically, we construct three distinct graphs: $\mathcal{G}_1$ and $\mathcal{G}_2$ share similar motifs but differ in computation tree distributions, while $\mathcal{G}_1$ and $\mathcal{G}_3$ exhibit dissimilar motifs but similar computation tree distributions. We employ the WL subtree kernel \citep{shervashidze2011weisfeiler} and the graphlet sampling kernel \citep{prvzulj2007biological} to assess tree and motif similarity, respectively, and utilize the inverse of the Central Moment Discrepancy \citep{zellinger2017central} to measure transferability. Further details on experimental settings and additional results are available in Appendix \ref{sec:synthetic data}. Our findings indicate that transferability is strongly associated with computation tree similarity rather than motif similarity, regardless of the scale of graphs (\# blocks).

\noindent\begin{figure}[!h]
  \vspace{-10pt}
  \begin{minipage}{0.28\textwidth}
    \includegraphics[width=\linewidth]{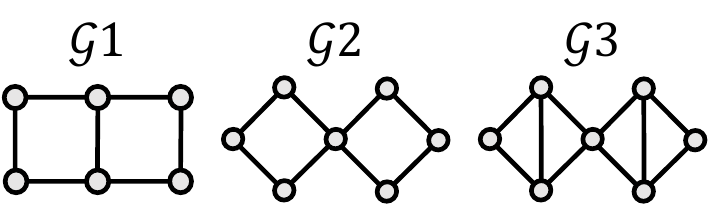}
    \captionof{figure}{Synthetic graphs composed of two basic blocks. More blocks can scale up the graph sizes.}
    \label{fig:syn}
  \end{minipage}
  \hfill
  \begin{minipage}{0.71\textwidth}
    \begin{minipage}{0.327\textwidth}
      \includegraphics[width=\linewidth]{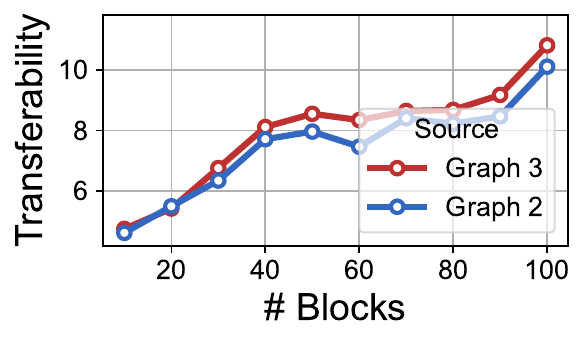}
    \end{minipage}
    \hfill
    \begin{minipage}{0.327\textwidth}
      \includegraphics[width=\linewidth]{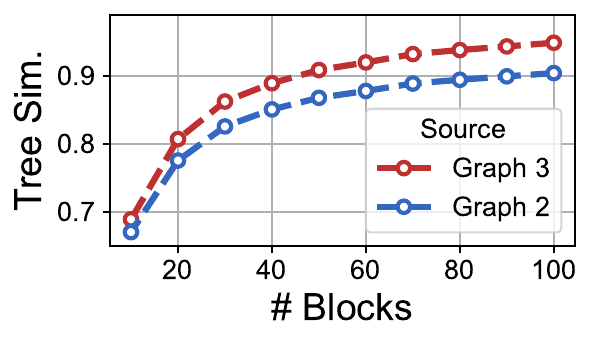}
    \end{minipage}
    \hfill
    \begin{minipage}{0.327\textwidth}
      \includegraphics[width=\linewidth]{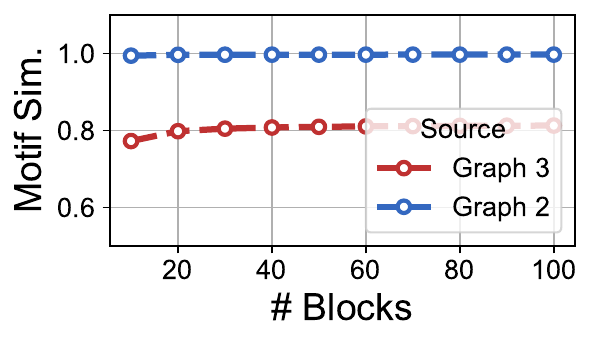}
    \end{minipage}
    \captionof{figure}{Transfer performance on synthetic graphs with $\mathcal{G}_1$ as the target graph. Higher tree similarity correlates with enhanced transferability.}
    \label{fig:syn graph transferability 1}
  \end{minipage}
  \vspace{-10pt}
\end{figure}

\textbf{Supportive Observations --- Real-world Graphs.} Table \ref{tab:transfer pattern real} validates the correlation between computation tree similarity and transferability in real-world graphs, including homophily \texttt{Airport} networks \citep{ribeiro2017struc2vec} and heterophily \texttt{WebKB} networks \citep{pei2020geom}. We evaluate transferability based on transfer learning performance in node classification tasks. Detailed experimental settings and additional results are available in Appendix \ref{sec:real data}. Our findings in real-world graphs corroborate those in synthetic graphs: higher computation tree similarity enhances transferability, while the impact of motifs is marginal, no matter using original node features (Table \ref{tab:transfer pattern real}) or randomized node features (Table \ref{tab:transfer pattern real random}).

\begin{table}[!h]
  \vspace{-10pt}
  \centering
  \caption{Transfer learning performance on homophily (above) and heterophily (below) graphs. For any target graph, source graphs with higher tree similarity lead to improved accuracy, highlighted with \colorbox{Blue}{Blue}. Conversely, the influence of motif similarity is marginal, marked by \colorbox{LightBlue}{LightBlue}.}
  % \vspace{-5pt}
  \label{tab:transfer pattern real}

  \resizebox{\linewidth}{!}{
    \begin{tabular}{l  cc  cc  cc}
      \toprule
      $\mathcal{G}_{target} \rightarrow$ & \multicolumn{2}{c}{\texttt{Brazil}}  & \multicolumn{2}{c}{\texttt{Europe}} & \multicolumn{2}{c}{\texttt{USA}}                                                                                                 \\ \cmidrule(lr){2-3}\cmidrule(lr){4-5}\cmidrule(lr){6-7}
      $\mathcal{G}_{source} \rightarrow$ & \texttt{Europe}                      & \texttt{USA}                        & \texttt{Brazil}                        & \texttt{USA}                & \texttt{Brazil}             & \texttt{Europe}             \\ \midrule
      Motif Sim.                         & \cellcolor{LightBlue}99.01           & 92.65                               & \cellcolor{LightBlue}99.00             & 96.81                       & 92.68                       & \cellcolor{LightBlue}96.81  \\
      Acc. / Tree Sim.                   & 53.1 / 34.6                          & \cellcolor{Blue}56.8 / 62.2         & 50.8 / 34.6                            & \cellcolor{Blue}51.4 / 88.7 & 54.5 / 62.2                 & \cellcolor{Blue}57.9 / 88.7 \\
      \midrule\midrule
      $\mathcal{G}_{target} \rightarrow$ & \multicolumn{2}{c}{\texttt{Cornell}} & \multicolumn{2}{c}{\texttt{Texas}}  & \multicolumn{2}{c}{\texttt{Wisconsin}}                                                                                           \\ \cmidrule(lr){2-3}\cmidrule(lr){4-5}\cmidrule(lr){6-7}
      $\mathcal{G}_{source} \rightarrow$ & \texttt{Texas}                       & \texttt{Wisconsin}                  & \texttt{Cornell}                       & \texttt{Wisconsin}          & \texttt{Cornell}            & \texttt{Texas}              \\ \midrule
      Motif Sim.                         & 99.97                                & \cellcolor{LightBlue}99.98          & \cellcolor{LightBlue}99.99             & \cellcolor{LightBlue}99.99  & 99.98                       & \cellcolor{LightBlue}99.99  \\
      Acc. / Tree Sim.                   & \cellcolor{Blue}46.5 / 65.3          & 42.4 / 42.7                         & \cellcolor{Blue}56.0 / 65.3            & 53.1 / 41.7                 & \cellcolor{Blue}48.6 / 42.7 & 48.2 / 41.7                 \\
      \bottomrule
    \end{tabular}
  }
  \vspace{-10pt}
\end{table}

\section{\textsc{GFT}: Graph Foundation Model with Transferable Tree Vocabulary}
\label{sec:method}

We develop \textsc{GFT}, a cross-domain and cross-task graph foundation model that leverages computation trees as transferable patterns within graph vocabulary. As illustrated in Figure \ref{fig:framework}, \textsc{GFT} undergoes pre-training through a computation tree reconstruction task to acquire general knowledge from a cross-domain graph database. Subsequently, \textsc{GFT} quantizes the embedding space of computation trees to form a discrete tree vocabulary, encapsulating fundamental, transferable computation tree patterns for diverse tasks. In the fine-tuning phase, \textsc{GFT} utilizes this tree vocabulary to unify graph-related tasks (including node-, link-, and graph-level tasks) as computation tree classification, adapting the general knowledge to specific target tasks.

\begin{figure}[!t]
  \includegraphics[width=\linewidth]{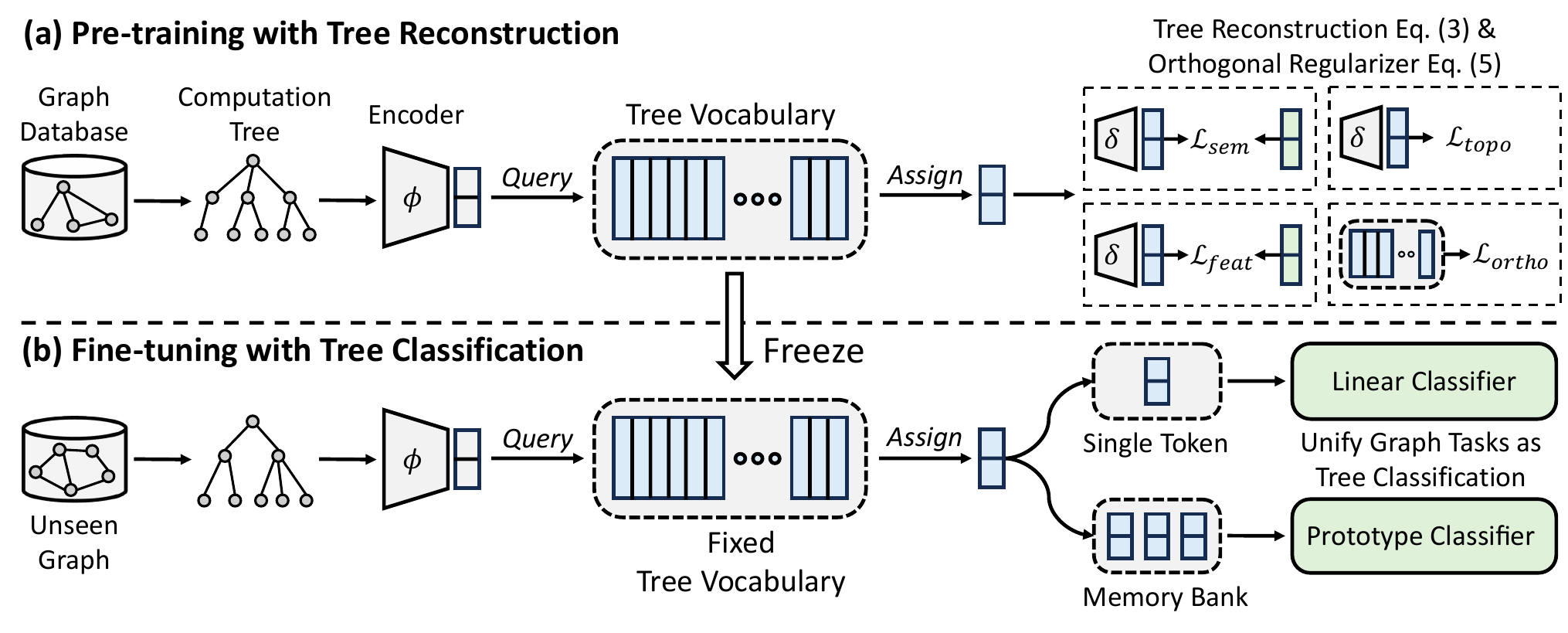}
  \caption{During pre-training, \textsc{GFT} encodes general knowledge from a graph database into a tree vocabulary through tree reconstruction. In fine-tuning, the learned tree vocabulary is applied to unify graph-related tasks as tree classification, adapting the general knowledge to specific tasks.}
  \label{fig:framework}
\end{figure}

\subsection{Pre-training with Computation Tree Reconstruction}
\label{sec:pretrain}

The pre-training stage focuses on learning general computation tree patterns on graphs, facing two primary challenges: (i) obtaining transferable patterns, and (ii) comprehensively capturing computation tree knowledge. For the first challenge, we learn a discrete tree vocabulary by quantizing the embedding space of computation trees \citep{van2017neural}. For the second challenge, we introduce a computation tree reconstruction task that considers multiple aspects of computation trees.

\textbf{Learning Tree Vocabulary.} The idea of learning a discrete computation tree vocabulary originates from the principles of sparse distributed memory in cognitive science \citep{kanerva1988sparse,kanerva1992sparse}, which stores and retrieves memory in a distributed manner. By adopting these principles, the tree vocabulary maintains a set of tokens that are reusable and adaptable across various tasks, improving model transferability.

We adopt the Vector Quantization (VQ) \citep{van2017neural} to develop the tree vocabulary. Given a graph database\footnote{We use text-attributed graphs in our experiments due to the data availability, and use textual encoder to align the node features, similar to \citet{liu2024one}. Despite designing node feature alignment method is crucial in GFMs, it is beyond the scope of this paper.} $\gD = \{\gG_i\}_{i=1}^{n}$, we randomly extract a set of computation trees $\gT = \{\gT_i\}_{i=1}^{m}$ and employ a GNN encoder $\phi$ to generate the tree embeddings $\gZ = \{\vz_i\}_{i=1}^m$. We define the computation tree vocabulary as a set of learnable tokens $\mC = \{\vc_1, ..., \vc_K\}$. The tree embedding space is quantized by assigning each embedding to the nearest token, resulting in quantized tree embeddings $\vq_i = \vc_j$, where $j = \argmin_{j} \| \vz_i - \vc_j \|_2$. We optimize this projection by back-propagating the reconstruction error to the tree vocabulary $\mC$ and applying a straight-through gradient estimator \citep{bengio2013estimating} to the encoder $\phi$. In particular, we jointly optimize vocabulary loss and commitment loss \citep{van2017neural}, along with tree reconstruction loss (discussed later), where the former updates the token vectors $\vc$ using the fixed quantization $\vq$, and the latter ensures alignment between the tokens in the vocabulary and the quantized tree embeddings, serving as a regularizer. The pre-training objective is thus defined as:
\begin{equation}
  \label{eq:pretrain}
  \gL_{pretrain} = \gL_{tree} + \underbrace{\frac{1}{m} \sum_{i=1}^{m} \bigl\| \text{sg}[\vz_i] - \vc_i \bigr\|_2^2}_{\text{vocabulary loss}} + \beta_1 \underbrace{\cdot \frac{1}{m} \sum_{i=1}^{m} \bigl\| \vz_i - \text{sg}[\vc_i] \bigr\|_2^2}_{\text{commitment loss}},
\end{equation}
where $\text{sg}[\cdot]$ denotes the stop-gradient operator and $\beta_1$ is the weight.

\textbf{Computation Tree Reconstruction.} We introduce a computation tree reconstruction task designed to enable a deep understanding of the structural and semantical attributes of computation trees \citep{ju2023multi}. We use the tree tokens to reconstruct the original computation tree, retaining general knowledge while discarding irrelevant details. Specifically, we develop three reconstruction tasks: (i) reconstructing the features of the root node $\gL_{feat}$, (ii) reconstructing the connectivity among nodes in the computation trees $\gL_{topo}$, and (iii) reconstructing the overall semantics of the computation trees $\gL_{sem}$:
\begin{align}
   & \gL_{feat} = \frac{1}{m}\sum_{i=1}^{m} \Bigl\| \hat{\vq_i^2} - \vx_i \Bigr\|_2^2, \quad\quad\quad\quad     \gL_{sem} = \frac{1}{m}\sum_{i=1}^{m} \Bigl( 1 - \frac{\hat{\vq_i^1}^T \hat{\vz_i}}{\| \hat{\vq_i^1}\| \| \hat{\vz}_i \| } \Bigr)^\gamma,                                                     \\
   & \nonumber \gL_{topo} = \sum_{(i, j) \in \gE, (i, j') \in \hat{\gE}} - \frac{1}{|\gE|} \log\Bigl(\sigma(\hat{\vq_i^3}^T \hat{\vq_j^3})\Bigr) -  \frac{1}{|\hat{\gE}|} \log\Bigl(1 - \sigma(\hat{\vq_i^3}^T \hat{\vq_{j'}^3})\Bigr) + \frac{1}{|\gE|} \Bigl\| [\vq_i^4 \| \vq_j^4] - \ve_{ij} \Bigr\|_2^2,
\end{align}
where $\hat{\vz}_i = \hat{\phi}(\gT_i)$ represents the semantics of the original computation trees, and $\hat{\phi}$ is updated through a moving average of the tree encoder $\phi$. The quantized tree embedding $\vq$ is projected via different decoders defined by MLP, $\hat{\vq}^j = \delta_j(\vq)$, $\gamma$ is the scaling factor, and $\gE$ and $\hat{\gE}$ represent sets of existing and non-existing connections in computation trees, respectively. $\ve_{ij}$ denotes the edge embedding between nodes $i$ and $j$. By jointly optimizing
these tasks, we establish a comprehensive reconstruction objective:
\begin{equation}
  \gL_{tree} = \beta_{2} \cdot \gL_{feat}  +  \beta_{3} \cdot \gL_{sem} + \beta_{4} \cdot  \gL_{topo},
\end{equation}
where $\beta_i$ indicates the weights of respective losses. The philosophies under these loss functions separately correspond to existing works \citep{kipf2016variational,hou2022graphmae,thakoor2022largescale,wang2024select}. For example, \citet{kipf2016variational} reconstruct the graph structure, aligning to the philosophy of $\gL_{topo}$, \citet{hou2022graphmae} reconstruct node feature that is similar to $\gL_{feat}$, and \citet{thakoor2022largescale,wang2024select} employ contrastive learning to maximize the alignment between two views, aligning to $\gL_{sem}$. Unlike existing methods that typically focus on reconstructing a single aspect of computation trees, \textsc{GFT} integrates multiple facets \citep{wang2023heterogeneous} to learn a general and transferable tree vocabulary.

\textbf{Enhancing the Quality of Tree Vocabulary.} The effectiveness of \textsc{GFT} is correlated to the quality of the tree vocabulary, which should be both comprehensive and expressive. A comprehensive vocabulary is inclusive enough to accommodate new patterns, while an expressive vocabulary ensures that different tree tokens do not overlap in representation \citep{mao2024graph}. To enhance comprehensiveness, we augment the computation trees during pre-training, increasing the variety of observed computation trees through node feature augmentation and structural augmentation, as described by \citep{zhu2020deep}. To improve expressiveness, we regularize the tree vocabulary space by intentionally increasing the distance between distinct tokens \citep{shin2023exploration}. Specifically, we introduce an orthogonal regularizer designed to maintain tree tokens orthogonal to each other, effectively expanding the tree token space:
\begin{equation}
  \label{eq:ortho}
  \gL_{ortho} = \lambda \frac{1}{K^2} \Bigl\| \mC \mC^T - \mI_K \Bigr\|_F^2, \quad\mC = [\vc_1, ..., \vc_K]^T \in \R^{K \times d'},
\end{equation}
where $\vc_i$ is tree token, $\mI_K$ is the identity matrix for $K$ dimensions, and $\| \cdot \|_F$ denotes the Frobenius norm. The orthogonal loss $\gL_{ortho}$ is integrated with Equation \ref{eq:pretrain}. More analysis is in Appendix \ref{sec:analysis index collapse}.

\subsection{Fine-tuning with Computation Tree Classification}

The pre-training stage encodes general knowledge into the tree vocabulary, while the fine-tuning phase adapts this knowledge to specific tasks. This adaptation is challenging because identical patterns can have different meanings across domains and tasks \citep{cao2023pre}. For example, a triangular structure indicates stable relationships in social networks (node classification) but denotes unstable chemical properties in molecular networks (graph classification). To this end, we propose computation tree classification that utilizes the tree vocabulary to unify graph tasks as the tree-level task, ensuring the adaptation is applicable across diverse tasks and domains.

\textbf{Reformulate Graph Tasks as Computation Tree Classification.} Graph-related tasks can be represented by task-specific computation trees, as illustrated in Figure \ref{fig:tree}. Specifically, for node classification, the task-specific computation tree, denoted as $\gT_{node} = \gT_i$, is derived directly from the node itself, resulting in the embedding $\vz = \phi(\gT_i)$. For link prediction, the computation tree, $\gT_{link} = \text{Combine}(\gT_s, \gT_t)$, merges the computation trees of two nodes of the edge, with the embedding $\vz = \text{mean}(\phi(\gT_s), \phi(\gT_t))$. For graph classification, the task-specific computation tree $\gT_{graph} = \text{Combine}(\{ \gT_v \}_{v \in \gV})$ integrates the computation trees of all nodes within the graph, and computes the embedding as $\vz = \text{mean}(\{ \phi(\gT_v) \}_{v \in \gV})$. Subsequently, the embeddings of these task-specific trees are used to query the tree vocabulary and then make predictions, adapting the general knowledge encoded in the vocabulary to various tasks and domains.

\textbf{Prototype Classifier.} The prototype classifier $f_{proto}$ constructs class prototypes using tokens from the tree vocabulary. Given a set of task-specific computation trees $\{ (\gT_i, y_i) \}_{i=1}^n$ with $|C|$ classes, we employ the pre-trained GNN encoder $\phi$ to generate tree embeddings $\gZ = \{ \vz_i \}_{i=1}^n$. These embeddings are then used to query the tree vocabulary and produce quantized embeddings $\gQ = \{ \vq_i \}_{i=1}^n$. Subsequently, we construct a class-wise memory bank $\sS = \{\sS^1, ..., \sS^{|C|}\}$, where $\sS^k = \{\vq_i \in \gQ | y_i = k\}$, to store tree tokens of the same class. The memory bank typically includes all instances from the training set. From this, we derive a set of prototypes for each class $\{\vp_k\}_{k=1}^{|C|}$, calculated as $\vp_k = (1 / |\sS^k|) \sum_{\vq_i \in \sS^k} \vq_i$. These prototypes are then used for predictions:
\begin{equation}
  p(y=k | \vz) = \frac{\exp(-\text{sim}(\vz, \vp_k) / \tau)}{\sum_c\exp(-\text{sim}(\vz, \vp_c) / \tau)},
\end{equation}
where $\text{sim}(\cdot)$ denotes the cosine distance and $\tau$ is a temperature scaling factor. We optimize the cross-entropy loss between the classifier's output and the ground truth to update the encoder $\phi$.

\textbf{Linear Classifier.} Different from the prototype classifier, which utilizes class prototypes to adapt to target tasks, the linear classifier $f_{lin}$ directly applies the knowledge encoded in each tree token. Specifically, given a task-specific computation tree $\gT_i$, we use the encoder to generate tree embeddings $\vz_i$ and then query the tree vocabulary to retrieve $\vq_i$. These embeddings are used for predictions as:
\begin{equation}
  p(y=k | \vz) = \frac{\exp(\text{lin}^k(\vq) / \tau)}{\sum_c\exp(\text{lin}^c(\vq) / \tau)},
\end{equation}
We optimize the cross-entropy loss between the prediction $f_{lin}(\vz)$ and the ground truth to update the parameters of the encoder and the linear classifier. During inference, predictions from both the prototype and linear classifiers are combined to form the final output. It is important to note that the tree vocabulary remains fixed during fine-tuning to preserve the integrity of the encoded knowledge.

\subsection{Additional Analysis}
\label{sec:analysis}

\textbf{Tree Vocabulary Learns Generalizable Tokens.}
Learning tree vocabulary via VQ involves clustering within the embedding space of computation trees, utilizing a margin-aware classifier \citep{crammer2002margin} that assigns each computation tree to a specific cluster. Assuming that each computation tree $\gT$ is associated with an underlying clustering label $y$, and that each pair $(\gT_i, y_i)$ is sampled from the distribution $\gP_\gT$, we derive the following theorem:

\begin{theorem}
  \label{thm:code}
  Given a set of computation trees $\{(\gT_i, y_i)\}_{i=1}^n$ sampled from the distribution $\gP_\gT$, the VQ process functions as a margin-aware prototype classifier $f$ that predicts the class of computation trees via a distance measure. The risk $\gR(f)$ of classifier $f$ can be bounded with probability $1 - \delta$:
  \begin{equation}
    \gR(f) \leq \hat{\gR}(f) + \frac{20 \cdot \gC \cdot p (p - 1) \cdot \sqrt{n}}{\rho \cdot n} + \sqrt{\frac{\ln (2 / \delta)}{2 n}},
  \end{equation}
  where $\hat{\gR}(f)$ is the empirical risk, $p$ denotes the number of tokens, $\gC$ is a constant, and $\rho$ acts as the margin, serving as a penalty factor in evaluating the distance between computation trees and tokens.
\end{theorem}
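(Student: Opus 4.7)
The plan is to cast the VQ stage as a multiclass margin-based prototype classifier with $p$ classes (one per vocabulary token) and then invoke a standard Rademacher-complexity generalization bound for margin classifiers. Concretely, I would formalize the classifier as $f(\gT) = \argmin_{j \in [p]} \| \phi(\gT) - \vc_j \|_2$, or equivalently, define score functions $s_j(\gT) = - \| \phi(\gT) - \vc_j \|_2$ so that $f$ predicts $\argmax_j s_j(\gT)$. The margin at $(\gT, y)$ is then $\mu_f(\gT, y) = s_y(\gT) - \max_{j \neq y} s_j(\gT)$, and a $\rho$-truncated surrogate $\Phi_\rho(\mu_f(\gT, y))$ defines the empirical margin risk $\hat{\gR}(f)$.

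Next, I would apply McDiarmid's inequality to the bounded $\rho$-margin loss to obtain, with probability at least $1-\delta$, the confidence term $\sqrt{\ln(2/\delta)/(2n)}$. This reduces the problem to bounding $\E\bigl[\sup_f \bigl(\gR(f) - \hat{\gR}(f)\bigr)\bigr]$ by the Rademacher complexity $\mathfrak{R}_n(\gH)$ of the induced loss class via standard symmetrization. I would then use the Koltchinskii–Panchenko decomposition for multiclass margin losses, where the max-over-classes operation is expanded as a sum of pairwise score differences $s_j - s_k$ over ordered pairs $j \neq k$. This contributes the $p(p-1)$ factor, and the $1/\rho$-Lipschitz property of $\Phi_\rho$ (via Talagrand contraction) yields the $1/\rho$ factor. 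Each score $s_j$ is a Lipschitz function of the bounded embedding $\phi(\gT)$, so its Rademacher complexity is of order $1/\sqrt{n}$, producing an overall term $\gC \cdot p(p-1)/(\rho \sqrt{n}) = \gC \cdot p(p-1) \sqrt{n}/(\rho n)$. Gathering numerical constants from symmetrization, contraction, and the McDiarmid step absorbs into the multiplicative factor $20$ that appears in the statement.

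The hardest step will be the contraction argument for the multiclass margin. Because the standard Talagrand contraction lemma only handles scalar Lipschitz transformations, one must either invoke a vector-valued contraction result (e.g., Maurer's inequality) or carry out a union-bound/pairwise decomposition over the $p(p-1)$ ordered class pairs; the latter is how the $p(p-1)$ factor in the bound is produced rather than a less favorable $p^2 \log p$ style dependency. A secondary subtlety is ensuring the latent clustering labels $y_i$ are meaningfully defined in the unsupervised VQ setting; this is resolved by the theorem's assumption that $(\gT_i, y_i) \sim \gP_\gT$ are drawn i.i.d., which makes the true risk $\gR(f) = \Pr_{(\gT,y) \sim \gP_\gT}[f(\gT) \neq y]$ well-posed and permits the concentration of $\hat{\gR}(f)$ around $\gR(f)$ at the usual parametric $1/\sqrt{n}$ rate.
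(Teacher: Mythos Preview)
Your proposal is correct and follows the same high-level strategy as the paper: interpret VQ as a margin-based prototype classifier, decompose the multiclass problem into $p(p-1)$ pairwise binary comparisons, and bound each pair's complexity at the $1/\sqrt{n}$ rate. The differences are in the technical machinery rather than the idea. The paper works with \emph{Gaussian} complexity and quotes three results of Bartlett--Mendelson (2002) directly: Theorem~7 for the margin bound $\gR(f)\le \hat{\gR}(f)+\tfrac{2\gC}{\rho}G_n(\gF)+\sqrt{\ln(2/\delta)/(2n)}$, Theorem~16 to split $\gF$ into the sub-classes $\gF_{ij}$ (one per token pair), and Lemma~22 to bound $\hat G_n(\gF_{ij})$ via a kernel argument. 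A key simplification the paper exploits---and that your sketch does not make explicit---is that with \emph{squared} Euclidean distance the pairwise score difference becomes \emph{linear} in the embedding, since $\|x-\vp_i\|_2^2-\|x-\vp_j\|_2^2 = 2x^\top(\vp_j-\vp_i)+\|\vp_i\|_2^2-\|\vp_j\|_2^2$; this is what lets them invoke the linear-class complexity bound directly and obtain the factor $10\,\gB^3/\sqrt{n}$ (then set $\gB=1$ because cosine similarity is used). Your route via McDiarmid, symmetrization, and Talagrand/Maurer contraction on Lipschitz scores $s_j(\cdot)=-\|\phi(\cdot)-\vc_j\|_2$ would also work, but the constants and the path to the specific $20$ prefactor are less transparent than in the paper's linearization-plus-citation argument.
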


\begin{remark}
  The generalizability of tokens within the vocabulary highly correlates to the margin $\rho$, the number of observed computation trees $n$, and the number of tokens $p$. (i) A larger margin $\rho$ results in a tighter bound by ensuring higher inter-cluster distances and lower intra-cluster distances. This supports the use of an orthogonal regularizer (Equation \ref{eq:ortho}) that explicitly pushes tokens apart, enhancing cluster distinction. (ii) An increased number of observed computation trees $n$ leads to a tighter generalization bound, which supports the use of augmentations to increase the diversity of computation trees. (iii) More tokens $p$ may loose the upper bound of the generalization error, potentially due to a higher risk of overfitting. This aligns with our experimental findings that more tokens do not necessarily lead to improved performance (Section \ref{sec:ablation}).
\end{remark}

\begin{wrapfigure}{r}{0.33\linewidth}
  % \vspace{-10pt}
  \centering
  \includegraphics[width=\linewidth]{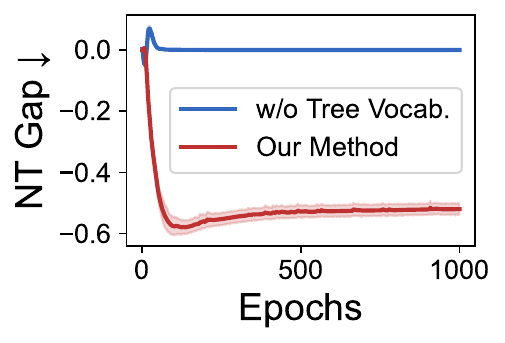}
  \vspace{-15pt}
  \caption{Negative transfer gap on \texttt{Cora} in node classification.}
  \vspace{-10pt}
  \label{fig:nt gap}
\end{wrapfigure}

\textbf{Tree Vocabulary Mitigates Negative Transfer.} Negative Transfer (NT) occurs when the pre-training process degrades model performance on a target task. This issue often results from misalignment between the pre-training and fine-tuning tasks \citep{wang2019characterizing,cao2023pre}. Following the approach in \citep{wang2019characterizing}, we characterize the NT gap, $\gR(S, T) - \gR(\emptyset, T)$, as the risk gap on task $T$ with ($\gR(S, T)$) and without ($\gR(\emptyset, T)$) pre-training on task $S$, where a smaller NT gap indicates improved transferability. As illustrated in Figure \ref{fig:nt gap}, employing the learned tree vocabulary to align the tree reconstruction task in pre-training and tree classification task in fine-tuning can significantly mitigate negative transfer.

\textbf{Complexity Analysis.} A comprehensive complexity analysis of \textsc{GFT} is provided in Appendix \ref{sec:complexity}. Notably, \textsc{GFT} employs a single GNN to decompose and encode computation trees, taking $\gO(L \cdot (|\gE| \cdot d + |\gV| \cdot d^2))$. In contrast, subgraph-based GFMs \citep{huang2023prodigy,liu2024one} require the explicit extraction of subgraphs for each node, taking additional $\gO(|\gV|^3)$ using adjacency matrix-based BFS. This contrast highlights the efficiency of using computation trees as transferable patterns in terms of time complexity. More discussions are in Appendix \ref{sec:more discussion}.

\section{Experiments}
\label{sec:exp}

\subsection{Experimental Setting}

We employ cross-domain and cross-task graph datasets to evaluate the effectiveness of \textsc{GFT}. For node-level tasks, we utilize citation networks such as \texttt{Cora}, \texttt{PubMed}, \texttt{Arxiv}, and the web link network \texttt{WikiCS}. For edge-level tasks, we include two Knowledge Graphs (KGs), \texttt{WN18RR} and \texttt{FB15K237}. For graph-level tasks, we use molecule networks, including \texttt{HIV}, \texttt{PCBA}, and \texttt{ChEMBL}. All preprocessing steps follow \citep{liu2024one}. We take various baselines, encompassing MLP, supervised GNNs such as GCN \citep{kipf2017semisupervised}, GAT \citep{velickovic2018graph}, GIN \citep{xu2018how}, and self-supervised methods like BGRL \citep{thakoor2022largescale}, GraphMAE \citep{hou2022graphmae}, GIANT \citep{chien2021node}, and GFMs including Prodigy \citep{huang2023prodigy} and OFA \citep{liu2024one}. We replicate each experiment ten times and report the average performance to minimize the influence of randomness. Further details on experimental settings are available in Appendix \ref{sec:exp setting}.

\subsection{Effectiveness on Cross-Domain and Cross-Task Datasets}

\textbf{Pre-training and Fine-tuning.} Table \ref{tab:main result} demonstrates the model performance across cross-domain and cross-task datasets in pre-training and fine-tuning setting. We evaluate the effectiveness of graph foundation models \citep{huang2023prodigy,liu2024one} in the following few-shot setting due their distinctive training mechanisms, such as in-context pre-training \citep{huang2023prodigy} and fully supervised training \citep{liu2024one}. For supervised baselines, models are trained directly on the target graph; for self-supervised methods, we pre-train across all datasets before adapting to the specific target graph. Our approach demonstrates a substantial performance improvement, exceeding the best baseline by an average of over 6\%. Specifically, our method outperforms the best baseline by 2\% across three datasets and by 5\% across another three datasets. This underscores the effectiveness of using computation trees as transferable patterns.

\begin{table}[!t]
  \centering
  \caption{Model performance in pre-training and fine-tuning setting. \textbf{Bold} and \underline{underline} highlight the best and sub-best performance, and $^*$ and $^\ddagger$ denote a 2\% and 5\% improvement over the best baseline. The model performance with standard deviation is in Appendix \ref{sec:main result full}.}
  \label{tab:main result}
  \resizebox{\linewidth}{!}{
    \begin{tabular}{l cccccccc c}
      \toprule
                                         & \multicolumn{4}{c}{Node Classification} & \multicolumn{2}{c}{Link Classification} & \multicolumn{2}{c}{Graph Classification} &                                                                                                                                                       \\ \cmidrule(lr){2-5} \cmidrule(lr){6-7} \cmidrule(lr){8-9}
      Method                             & \texttt{Cora}                           & \texttt{PubMed}                         & \texttt{Wiki-CS}                         & \texttt{Arxiv}    & \texttt{WN18RR}           & \texttt{FB15K237} & \texttt{HIV}              & \texttt{PCBA}             & \textit{\textbf{Avg.}}    \\ \midrule
      Linear                             & 58.03                                   & 68.66                                   & 70.36                                    & 66.50             & 78.50                     & 87.39             & 66.37                     & 72.30                     & 71.01                     \\
      GCN \citep{kipf2017semisupervised} & 75.65                                   & \underline{75.61}                       & 75.28                                    & \underline{71.40} & 73.79                     & 82.22             & 64.84                     & 71.32                     & 73.76                     \\
      GAT \citep{velickovic2018graph}    & \underline{76.24}                       & 74.86                                   & 76.78                                    & 70.87             & 80.16                     & \underline{88.93} & 65.54                     & 70.12                     & \underline{75.44}         \\
      GIN \citep{xu2018how}              & 73.59                                   & 69.51                                   & 49.77                                    & 65.05             & 74.02                     & 83.21             & \underline{66.86}         & \underline{72.69}         & 69.34                     \\ \midrule
      DGI \citep{velickovic2018deep}     & 72.10                                   & 73.13                                   & 75.32                                    & 69.15             & 75.75                     & 81.34             & 59.62                     & 63.31                     & 71.22                     \\
      BGRL \citep{thakoor2022largescale} & 71.20                                   & 75.29                                   & 76.53                                    & 71.19             & 75.44                     & 80.66             & 63.95                     & 67.09                     & 72.67                     \\
      GraphMAE \citep{hou2022graphmae}   & 73.10                                   & 74.32                                   & \underline{77.61}                        & 70.90             & 78.99                     & 85.30             & 61.04                     & 63.30                     & 73.07                     \\
      GIANT \citep{chien2021node}        & 75.13                                   & 72.31                                   & 76.56                                    & 70.10             & \underline{84.36}         & 87.45             & 65.44                     & 61.49                     & 74.11                     \\ \midrule
      \rowcolor{Gray}\textsc{GFT}        & \textbf{78.62$^*$}                      & \textbf{77.19$^*$}                      & \textbf{79.39$^*$}                       & \textbf{71.93}    & \textbf{91.91$^\ddagger$} & \textbf{89.72}    & \textbf{72.67$^\ddagger$} & \textbf{77.90$^\ddagger$} & \textbf{79.92$^\ddagger$} \\
      \bottomrule
    \end{tabular}
  }
\end{table}

\begin{table}[!t]
  \caption{Few-shot learning performance. Additional results with more baselines are in Appendix \ref{sec:more few-shot}.}
  \label{tab:few shot}
  \resizebox{\linewidth}{!}{
    \begin{tabular}{l cccccccccccc}
      \toprule
                                         & \multicolumn{3}{c}{\tt Arxiv - 40 way} & \multicolumn{3}{c}{\tt Arxiv - 5 way} & \multicolumn{3}{c}{\tt FB15K237 - 40 way} & \multicolumn{3}{c}{\tt FB15K237 - 10 way}                                                                                                                                                                                                                                 \\ \cmidrule(lr){2-4}\cmidrule(lr){5-7}\cmidrule(lr){8-10}\cmidrule(lr){11-13}
      Method                             & 5-shot                                 & 3-shot                                & 1-shot                                    & 5-shot                                    & 3-shot                    & 1-shot                    & 5-shot                    & 3-shot                    & 1-shot                    & 5-shot                    & 3-shot                    & 1-shot                    \\ \midrule
      BGRL \citep{thakoor2022largescale} & -                                      & 17.98                                 & -                                         & -                                         & 48.43                     & -                         & -                         & 29.24                     & -                         & -                         & 67.23                     & -                         \\
      GraphMAE \citep{hou2022graphmae}   & -                                      & 19.12                                 & -                                         & -                                         & 49.24                     & -                         & -                         & 32.07                     & -                         & -                         & 69.75                     & -                         \\
      GIANT \citep{chien2021node}        & -                                      & 20.12                                 & -                                         & -                                         & 54.33                     & -                         & -                         & 52.63                     & -                         & -                         & 77.21                     & -                         \\ \midrule
      Prodigy \citep{huang2023prodigy}   & 25.51                                  & 23.69                                 & 21.44                                     & 61.09                                     & 58.64                     & 48.23                     & 62.03                     & 59.58                     & 54.30                     & 84.30                     & 79.61                     & 66.10                     \\
      OFA \citep{liu2024one}             & 24.01                                  & 22.13                                 & 21.34                                     & 59.92                                     & 58.68                     & 52.80                     & 66.51                     & 65.76                     & 63.48                     & 83.64                     & 83.14                     & 83.46                     \\
      % UniGraph \citep{he2024unigraph}    & -                                      & 31.35                                 & -                                         & -                                         & \textbf{74.12} & -              & -              & 68.76          & -              & -              & 85.32          & -              \\
      \rowcolor{Gray}\textsc{GFT}        & \textbf{36.29$^\ddagger$}              & \textbf{34.36$^\ddagger$}             & \textbf{26.49$^\ddagger$}                 & \textbf{68.00$^\ddagger$}                 & \textbf{66.00$^\ddagger$} & \textbf{58.20$^\ddagger$} & \textbf{75.01$^\ddagger$} & \textbf{74.56$^\ddagger$} & \textbf{74.97$^\ddagger$} & \textbf{89.13$^\ddagger$} & \textbf{88.53$^\ddagger$} & \textbf{88.07$^\ddagger$} \\
      \midrule\midrule
                                         & \multicolumn{3}{c}{\tt WN18RR 10-way}  & \multicolumn{3}{c}{\tt Cora 5-way}    & \multicolumn{3}{c}{\tt HIV 2-way}         & \multicolumn{3}{c}{\tt PCBA 2-way}                                                                                                                                                                                                                                        \\ \cmidrule(lr){2-4}\cmidrule(lr){5-7}\cmidrule(lr){8-10}\cmidrule(lr){11-13}
      Method                             & 5-shot                                 & 3-shot                                & 1-shot                                    & 5-shot                                    & 3-shot                    & 1-shot                    & 10-shot                   & 5-shot                    & 1-shot                    & 10-shot                   & 5-shot                    & 1-shot                    \\ \midrule
      OFA \citep{liu2024one}             & 32.64                                  & 30.56                                 & 25.82                                     & 42.28                                     & 31.28                     & 23.68                     & 54.36                     & 57.56                     & 57.17                     & 54.58                     & 54.80                     & 54.92                     \\
      \rowcolor{Gray}\textsc{GFT}        & \textbf{35.50$^\ddagger$}              & \textbf{35.50$^\ddagger$}             & \textbf{35.33$^\ddagger$}                 & \textbf{52.30$^\ddagger$}                 & \textbf{51.47$^\ddagger$} & \textbf{49.80$^\ddagger$} & \textbf{58.67$^\ddagger$} & \textbf{58.78$^*$}        & \textbf{59.94$^*$}        & \textbf{59.34$^\ddagger$} & \textbf{59.34$^\ddagger$} & \textbf{55.88}            \\
      \bottomrule
    \end{tabular}
  }
\end{table}

\textbf{Few-shot Learning} Table \ref{tab:few shot} presents the few-shot learning performance of \textsc{GFT} compared to self-supervised methods \citep{thakoor2022largescale,hou2022graphmae,chien2021node} and graph foundation models \citep{huang2023prodigy,liu2024one,he2024unigraph}. We randomly select $k$ samples per way from the training set for fine-tuning\footnote{Cora \& WN18RR: 1; Arxiv: 5; HIV \& PCBA: 20; FB15K237: 30.}. This method is similar to Prodigy \citep{huang2023prodigy}, and is much more label-efficient than OFA \citep{liu2024one} with supervised pre-training. Despite the extremely limited labels for fine-tuning, \textsc{GFT} significantly surpasses existing methods, showing the fast adaptation capability. Appendix \ref{sec:more few-shot} shows more fine-tuning instances can significantly improve performance.

\subsection{Transferability}

Table \ref{tab:pretrain datasets results} shows the impact of different pre-training datasets under the pre-training and fine-tuning setting, where comprehensive results (including the following ablation studies) are available in Appendix \ref{sec:complete ablation}. The performance for specific tasks (node-, link-, graph-level) represent the average across all involved datasets. We examine three scenarios with distinct pre-training datasets: (i) all datasets, (ii) only the target dataset, and (iii) datasets excluding the target dataset. These variants are compared against GAT and GIANT, which represent the best supervised and self-supervised baselines, respectively. Notably, \textsc{GFT} consistently outperforms all baselines, regardless of the pre-training dataset utilized. Interestingly, performance improves when using datasets excluding the target dataset compared to pre-training solely on the target dataset. We hypothesize that the computation trees from the non-target datasets provide sufficient information to facilitate the learning of a transferable tree vocabulary, thereby promoting positive transfer.

We further evaluate the impact of various combinations of pre-training datasets on the target tasks, as depicted in Figure \ref{fig:transfer}. For pre-training, we select FB15K237, Arxiv, and ChEMBL, while Cora, WikiCS, WN18RR, and HIV serve as the target datasets. Our findings indicate that an increased number of pre-training datasets consistently enhances performance across all target datasets. However, for existing GFMs, transferability closely correlates with the selection of pre-training datasets, with more datasets sometimes leading to negative transfer \citep{he2024unigraph,li2024zerog}. This observation underscores the adaptability of using computation trees as transferable patterns in graph vocabulary.

\subsection{Ablation Study}
\label{sec:ablation}

\begin{figure}[!t]
  \begin{minipage}[c]{0.52\linewidth}
    \centering
    \subfloat[\texttt{Cora (Node)}]{\includegraphics[width=0.45\linewidth]{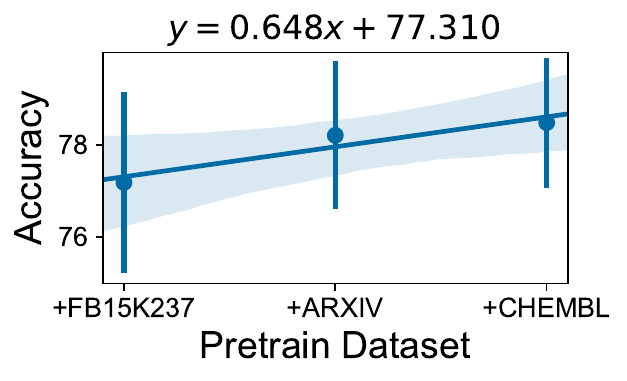}}
    \subfloat[\texttt{WikiCS (Node)}]{\includegraphics[width=0.45\linewidth]{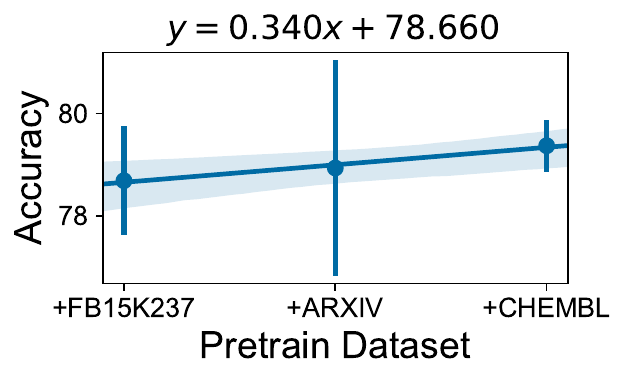}} \\
    \subfloat[\texttt{WN18RR (Edge)}]{\includegraphics[width=0.45\linewidth]{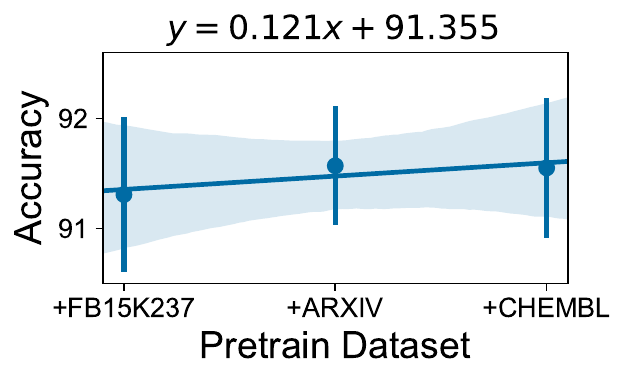}}
    \subfloat[\texttt{HIV (Graph)}]{\includegraphics[width=0.45\linewidth]{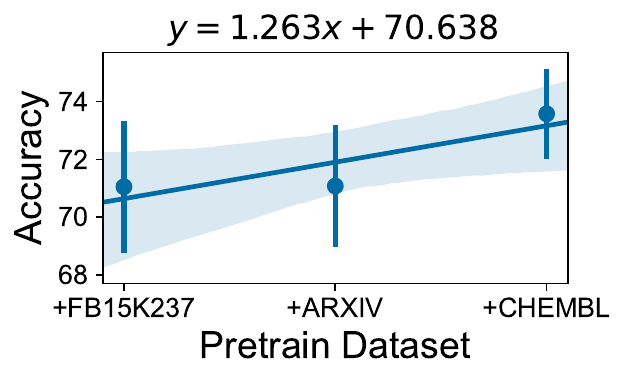}}
    \captionof{figure}{\textsc{GFT} consistently improves model performance with more pre-training datasets. }
    \label{fig:transfer}
  \end{minipage} \hfill
  \begin{minipage}[c]{0.45\linewidth}
    \centering
    \captionof{table}{Ablation on tree reconstruction (above) and tree classification (bottom).}
    \label{tab:ablation}
    \resizebox{\linewidth}{!}{
      \begin{tabular}{l ccc  c}
        \toprule
                        & \texttt{Node}     & \texttt{Link}     & \texttt{Graph}    & Avg.              \\ \midrule
        \multicolumn{5}{l}{\textit{Different Pre-training Tasks}}                                       \\ \midrule
        n/a             & 72.52             & 47.30             & 73.83             & 66.54             \\
        w. $\gL_{sem}$  & \underline{76.25} & 90.39             & \underline{74.99} & \underline{79.47} \\
        w. $\gL_{feat}$ & 75.85             & \underline{90.42} & 74.42             & 79.13             \\
        w. $\gL_{topo}$ & 75.50             & 90.28             & 74.57             & 78.96             \\ \midrule
        \multicolumn{5}{l}{\textit{Different Fine-tuning Classifiers}}                                  \\ \midrule
        w. $f_{proto}$  & \underline{76.64} & 38.71             & 59.89             & 62.97             \\
        w. $f_{lin}$    & 75.51             & \underline{88.99} & \underline{72.21} & \underline{78.06} \\ \midrule
        GFT             & \textbf{76.78}    & \textbf{90.82}    & \textbf{75.29}    & \textbf{79.92}    \\
        \bottomrule
      \end{tabular}
    }

  \end{minipage}
\end{figure}

\textbf{Tree Reconstruction and Classification.} Table \ref{tab:ablation} shows the impact of various reconstruction tasks in pre-training and tree classifiers in fine-tuning. All reconstruction tasks enhance model performance compared to models without pre-training. Notably, semantic reconstruction is most effective for node-level and graph-level tasks due to its comprehensive consideration of node features and graph structures. Feature reconstruction is particularly beneficial for link-level tasks, as it preserves the original node semantics, which are crucial for KGs. The optimal performance is achieved when three tasks are jointly optimized, aligning with findings in \citet{ju2023multi}. Similarly, the combination of prototype and linear classifiers in tree classification leads to superior performance. Furthermore, removing strategies designed to enhance the quality of the tree vocabulary results in model degradation across all settings (Appendix \ref{sec:vocab quality ablation}).

\begin{figure}[!t]
  \begin{minipage}[c]{0.52\linewidth}
    \centering
    \captionof{table}{The impact of pre-training datasets.}
    \label{tab:pretrain datasets results}
    \resizebox{\linewidth}{!}{
      \begin{tabular}{lccc  c}
        \toprule
                                        & \texttt{Node}     & \texttt{Link}     & \texttt{Graph}    & Avg.              \\ \midrule
        GAT \citep{velickovic2018graph} & 74.69             & 84.55             & 67.83             & 75.44             \\
        GIANT \citep{chien2021node}     & 73.53             & 85.91             & 63.47             & 74.11             \\ \midrule
        \multicolumn{5}{l}{\textit{Different Pre-training Datasets}}                                                    \\ \midrule
        All Datasets                    & \textbf{76.78}    & \textbf{90.82}    & \textbf{75.29}    & \textbf{79.92}    \\
        Target Dataset                  & \underline{76.12} & 90.67             & 74.08             & 79.25             \\
        Remaining Datasets              & 75.94             & \underline{90.71} & \underline{74.86} & \underline{79.36} \\
        \bottomrule
      \end{tabular}
    }
  \end{minipage} \hfill
  \begin{minipage}[c]{0.46\linewidth}
    \centering
    \captionof{table}{The impact of tree vocabulary.}
    \label{tab:ablation code}
    \resizebox{\linewidth}{!}{
      \begin{tabular}{l ccc  c}
        \toprule
                          & \texttt{Node}     & \texttt{Link}     & \texttt{Graph}    & Avg.              \\ \midrule
        \multicolumn{5}{l}{\textit{Different Vocabulary Size}}                                            \\ \midrule
        \# Tokens = 128   & \underline{76.78} & \underline{90.82} & \textbf{75.29}    & \underline{79.92} \\
        \# Tokens = 256   & 76.71             & \textbf{90.86}    & 75.17             & 79.86             \\
        \# Tokens = 512   & \textbf{76.94}    & \textbf{90.86}    & \underline{75.21} & \textbf{79.99}    \\ \midrule
        \multicolumn{5}{l}{\textit{Without Vocabulary}}                                                   \\ \midrule
        \quad w/o. Vocab. & 75.90             & 86.70             & 69.17             & 76.91             \\
        \bottomrule
      \end{tabular}
    }
  \end{minipage}
\end{figure}

\textbf{Tree Vocabulary.} Table \ref{tab:ablation code} shows the importance of the vocabulary, where the use of vocabulary significantly enhances model performance, particularly in link- and graph-level tasks, which aligns to the theoretical analysis that the tree vocabulary improves generalization. However, we observe that increasing the number of tokens in the vocabulary does not necessarily enhance model performance; indeed, the improvements are often marginal.

\section{Conclusion} \label{sec:conclusion}

\noindent\textbf{Conclusion.} In this paper, we rethink the transferable pattern in graphs as computation trees and validate their transferability both empirically and theoretically. Building on this insight, we propose a cross-domain and cross-task GFM named \textsc{GFT}. This model leverages computation tree reconstruction to acquire general graph knowledge from cross-domain datasets and uses computation tree classification to facilitate adaptation to various target tasks. In future work, we aim to explore its capabilities for in-context learning and zero-shot learning.

\noindent\textbf{Limitations.} In this paper, we focus primarily on message-passing GNNs, as message-passing can be naturally unrolled as a tree-like structure. However, our analysis excludes graph transformers and expressive GNNs with specialized computational architectures. We plan to extend our analysis to understand the transferable patterns of these advanced learning algorithms in future work. Additionally, message-passing GNNs may lack the expressiveness needed to address isomorphism problems in graphs. One can apply advanced techniques \citep{zhang2021labeling} to handle link isomorphism and use advanced expressive GNNs \citep{zhang2024beyond} to tackle graph isomorphism. Moreover, the deployment of GFT in real-world applications may encounter efficiency issues, which can be mitigated by techniques like \citep{zhang2022graphless,wang2025mlp}.

\noindent\textbf{Boarder Impact.} The proposed \textsc{GFT} is a cross-domain and cross-task graph foundation model, designed for rapid adaptation to target tasks with extremely limited labels. We wish our research can support applications where label acquisition is challenging and model training is time-consuming, such as in molecular discovery and financial fraud detection.

\section*{Acknowledgments}
This work was partially supported by the NSF under grants IIS-2321504, IIS-2334193, IIS-2340346, IIS-2203262, IIS-2217239, CNS-2426514, CNS-2203261, and CMMI-2146076. Any opinions, findings, and conclusions or recommendations expressed in this material are those of the authors and do not necessarily reflect the views of the sponsors. 

\bibliographystyle{plainnat}
\bibliography{citation}

%%%%%%%%%%%%%%%%%%%%%%%%%%%%%%%%%%%%%%%%%%%%%%%%%%%%%%%%%%%%

\newpage
\appendix

\section*{Appendix: Table of Content}

% Table of Content

\titlecontents{psection}[3em]{\bfseries}{\contentslabel{2em}}{\hspace*{-2em}}{\titlerule*[0.5pc]{.}\contentspage}

% Change the indent of subtitle
\titlecontents{psubsection}[5em]{}{\contentslabel{2em}}{\hspace*{-2em}}{\titlerule*[0.5pc]{.}\contentspage}

\startcontents[appendices]
\printcontents[appendices]{p}{1}{\setcounter{tocdepth}{2}}

\newpage

\section{More Related Work}
\label{sec:related work}

\paragraph{Transferability of GNNs.} Early studies on the transferability of GNNs are based on two main approaches. The first approach utilizes graphon theory. \citet{ruiz2020graphon} derive a bound between the embeddings of two graphs sampled from the same graphon. However, identifying a common graphon in many real-world graphs is often unfeasible, limiting the direct application of this theorem in the design of graph foundation models. \citet{cao2023pre} employ the graphon theory to analyze transferability in pre-training and fine-tuning setting. They fit pre-trained graphs into a graphon space, ensuring transferability if the target graph can be generated within this space. Like \citet{ruiz2020graphon}, the challenge lies in acquiring sufficient data to adequately represent the graphon space for graph foundation model. Following this, \citet{sun2024fine} designed a fine-tuning method based on graphon theory. Although these studies consider graphon as a transferable pattern in graphs, the assumption is challenging to satisfy in cross-domain real-world graphs. Furthermore, graphon theory is generally limited to single-task applications, making it difficult to identify a shared graphon across node-, link-, and graph-level tasks. The send approach examines transferability through subgraphs \citep{zhu2021transfer} or graph spectrum \citep{levie2019transferability, levie2021transferability}. Specifically, \citet{levie2019transferability} analyze transferability from the perspective of stability, positing that effective transferability minimizes the impact of small perturbations. Similarly, \citet{levie2021transferability} explore transferability through stability, demonstrating that transfer between graphs is feasible when they discretize the same underlying space in a generic sense. \citet{zhu2021transfer} focus on transferability through ego-graphs, showing a higher similar ego-graph distribution leads to better transferability. In contrast to these two approaches, we treat computation trees as transferable patterns on graphs, and conduct both empirically and theoretically analysis to show their transferability. Additionally, we develop a graph foundation model that utilizes these computation tree patterns.

\paragraph{Generalization of GNNs.} Generalization is a closely related topic to transferability. For instance, \citet{scarselli2018vapnik} pioneer the analysis of GNNs' VC-dimension, focusing solely on the number of nodes. \citet{garg2020generalization} leverage Rademacher complexity to evaluate GNN generalization through computation tree perspectives.
% They identify graph motifs that GNNs struggle to learn, and establish generalization bounds for GNNs. 
Furthermore, the Rademacher complexity has been extended to transductive settings by \citet{esser2021learning} and \citet{cong2021provable}. Under the PAC-Bayesian framework, \citet{liao2021pac} offer a tighter generalization bound for GNNs compared to \citet{garg2020generalization}, and \citet{ju2023generalization} further improve the bound. Additionally, \citet{sun2024interpgnn} investigate these bounds through the lens of graph topology. Stability is another lens through which generalization is examined, with \citet{verma2019stability} focusing on 1-layer GNNs and linking generalization to the largest absolute eigenvalue of the graph convolution filter. \citet{tang2023towards} further establish bounds for transductive node classification, highlighting the significance of graph structural information for different GNN architectures.

\paragraph{GNN-based Graph Foundation Models.}

Developing graph foundation models involves two primary steps: (i) unifying the task space and (ii) unifying the domain space. Several studies focus on aligning the task space. \citet{qiu2020gcc} introduce a self-supervised model that empirically demonstrates the transferability of subgraphs across tasks. \citet{sun2022gppt, liu2023graphprompt} pinpoint the task gap between pre-training and fine-tuning as the primary performance bottleneck, addressing it through link prediction to unify these tasks. \citet{yan2024inductive} further adapt this model to an inductive setting, where the pre-training and fine-tuning graphs differ, proposing methods to bridge both the graph signal and structure gaps. \citet{yu2023multigprompt} implement multi-task pre-training to support various downstream tasks. \citet{sun2023all} employ subgraphs as fundamental transferable patterns, integrating node-, link-, and graph-level tasks into a unified subgraph-level task. Instead of extensive model fine-tuning, they incorporate a learnable subgraph into the original graph. Other research focuses on aligning the domain space. \citet{li2024zerog} introduce a zero-shot graph learning framework for cross-domain node classification, leveraging LLMs to unify node features across different graphs. In a similar vein, \citet{zhao2024all} propose a graph prompting method for cross-domain classification, utilizing singular value decomposition to align the feature space across various graphs. However, all of these methods are generally limited to single tasks or domains, and do not effectively address the complexities of datasets that span multiple domains and tasks.

To this end, \citet{huang2023prodigy} introduce a graph foundation model that utilizes LLMs to align the feature space of graphs and utilizes in-context learning to facilitate applications in node-level and link-level tasks. In particular, they extract the subgraphs for different tasks and conduct subgraph classification. However, this approach necessitates that the pre-training and fine-tuning tasks be identical due to the specialized in-context pre-training strategy. To address this constraint, \citet{liu2024one} use LLMs to align the feature spaces of cross-domain graph datasets and introduce a prompt graph to align various tasks on graphs. \citet{xia2024opengraph} leverages a graph tokenizer to convert the graph into sequence and propose to use transformer to handle such information. Following, \citet{he2024unigraph} concurrently train GNNs and LLMs to enhance performance further. Despite their empirical success, subgraph-based graph foundation models face challenges due to the GNNs' limitations in encoding substructures within subgraphs. Differently, we rethink computation trees as transferable patterns and propose a new graph foundation model based on that.

\paragraph{LLM-based Graph Foundation Models.}

LLMs present a promising avenue for the development of graph foundation models due to their ability to unify the output of various graph tasks \citep{fatemi2023talk,guo2023gpt4graph,wang2024can}. While GNNs require task-specific adjustments for model training, LLMs can accommodate a wide range of questions and generate appropriate answers. The primary challenge, though, lies in effectively translating graph structures into a natural language format that LLMs can comprehend. Current efforts in this domain focus on two main approaches. The first is to use natural language to describe the graph structure, such as what the nodes are and which pairs of nodes are connected \citep{fatemi2023talk}. Such methods can be further enhanced with extra embedding \citep{chai2023graphllm} or prompt engineer techniques to enhance the understanding of LLM. For example, \citet{guo2023gpt4graph} employ a self-prompting methods to utilize the context generated by LLM as input context; \citet{chai2023graphllm} used Build-a-Graph and Algorithmic prompting to faciliate LLM understanding. \citet{zhao2023graphtext} map a graph into tree-like tokens for designing LLM prompt, further enhancing learning capability. Additionally, another line of works \citep{tian2024graph, tang2023graphgpt} follow Visual Language Models (VLMs) to process the graph into embeddings by GNNs first and then employ LLM as a translator to decode the graph embedding.

It is noted that prior research such as \citet{wang2024can, huang2023can} indicate that LLMs can indeed capture structural information from graphs, which enhances their performance on downstream tasks. However, while LLMs show promise in basic graph reasoning tasks like connectivity checks and cycle detection, they struggle with complex graph patterns in graph learning tasks such as node and graph classification. Moreover, there is limited research on cross-domain graph foundation models, largely due to the diverse patterns and distributions of graphs across different domains. This underscores the importance of our work in identifying transferable patterns within graphs to pave the way for future advancements in graph foundation models.

\paragraph{Computation Tree.} The computation tree, or more broadly, the subtree, is a fundamental structure on graphs \citep{garg2020generalization}. It serves to (i) enhance the performance of existing GNNs and to (ii) measure graph similarity. Several studies \citep{jin2018junction, fey2020hierarchical, talak2021neural, wang2021tree} employ tree decomposition to develop advanced GNNs. Specifically, \citet{jin2018junction} treat the joint tree as complementary to the original graphs, while \citet{fey2020hierarchical} introduce inter-modality message passing between joint trees and the original graphs. \citet{talak2021neural} construct an H-tree by organizing nodes and subgraphs hierarchically, developing a neural tree model capable of approximating any probability distribution on a probabilistic graphical model. \citet{wang2021tree} propose a more efficient tree decomposition algorithm by separating the model layer and tree construction. Furthermore, \citet{huang2023growing} investigate the significance of trees in learning node representations and design a framework to identify the most crucial trees in a graph. Additionally, \citet{nikolentzos2023weisfeiler} design a hyperbolic learning framework to utilize the computation tree structure in creating expressive GNNs. \citet{bai2023answering} adapt the computation tree concept to knowledge graphs by optimizing the solution in query computation trees. Another avenue of research utilizes computation tree distributions to measure graph similarity. Notably, \citet{shervashidze2011weisfeiler} introduce the WL subtree kernel to measure discrepancies between graphs based on subtree structures. \citet{chuang2022tree} employs optimal transport to propose the tree mover's distance, estimating distribution shifts between graphs. \citet{wu2023non} utilizes a hierarchical WL subtree kernel to assess graph discrepancies and derive a generalization bound for cross-domain classification.

Different from these approaches, our work rethink the role of computation trees. We consider computation tree as a transferable pattern on graphs and both empirically and theoretically validate its transferability, thereby expanding the analysis for computation trees in graph learning.

\section{Complexity Analysis}
\label{sec:complexity}

\subsection{Computation Tree Decomposition and Encoding}

The decomposition and encoding of computation trees can be jointly finished by message-passing GNNs. Specifically, the learning process in message passing GNNs involves (i) extracting computation trees for each node and (ii) updating node embeddings within these trees from bottom to top. Utilizing a GraphSAGE-like architecture, as detailed in Appendix \ref{sec:exp setting}, each layer's learning comprises both aggregation and updating operations. We will analyze these two operations in the following.

Aggregation is an edge-wise operation that propagates messages from neighboring nodes to the target node. Consequently, this operation's computational complexity is linear to the number of edges in the graph $\gG = (\gV, \gE)$, expressed as $\gO(|\gE| \cdot d)$, where $d$ is the embedding dimension. The update process, on the other hand, is a node-wise operation that updates the state of each node based on aggregated messages through a neural network. Therefore, its time complexity is $\gO(|\gV| \cdot d^2)$, as the complexity of the neural network operations per node is $\gO(d^2)$. By integrating both aggregation and update processes in each layer, the overall complexity of our model is $\gO(L \cdot (|\gE| \cdot d + |\gV| \cdot d^2))$.

For node-level tasks, we directly use the embeddings of computation trees of the target node, which incurs a constant time complexity of $\gO(1)$. For link-level and graph-level tasks, we apply a non-parametric pooling function to aggregate subtree embeddings into a computation tree embedding, also with a time complexity of $\gO(1)$.

\subsection{Subgraph Extraction}

Current graph foundation models \citep{huang2023prodigy,liu2024one} treat subgraphs as transferable patterns and explicitly extract subgraphs for each node. For a given graph $\gG = (\gV, \gE)$, the extraction of ego-graph for all nodes results in a computational cost of $\gO(|\gV|^3)$ when using an adjacency matrix for the BFS algorithm.

\subsection{Vector Quantization}

Vector quantization assigns each instance to the nearest token in the vocabulary. This process involves measuring the distance between the instance and every token, then selecting the token with the minimum distance as the quantized embedding. Assuming there are $K$ tokens, the complexity of distance measurement (no matter Euclidean or Cosine) is $\gO(K \times d)$ per instance. Then, determining the shortest distance from $K$ measured distances can be achieved in $\gO(K)$, and replacing the original instance embedding with the selected token requires $\gO(1)$. Therefore, the predominant computational cost is distance measurement, leading to an overall complexity of $\gO(|\gT| \cdot K \cdot d)$, where $|\gT|$ represents the number of computation trees.

\subsection{Tree Reconstruction}

The computation tree reconstruction comprises three main tasks: feature reconstruction, semantic reconstruction, and topology reconstruction. Feature reconstruction utilizes a neural network-based decoder with a complexity of $\gO(d^2)$ and a mean squared error (MSE) loss of $\gO(|\gT|)$, resulting in a total complexity of $\gO(d^2 + |\gT|)$. Topology reconstruction focuses on edge information and takes a computational cost of $\gO(|\gE| \cdot d)$. Semantic reconstruction involves an additional GNN and a distance measurement, leading to a complexity of $\gO(L \cdot (|\gE| \cdot d + |\gV| \cdot d^2) + |\gT| \cdot d)$. Consequently, the overall computational complexity is approximated as $\gO(L \cdot (|\gE| \cdot d + |\gV| \cdot d^2) + |\gT| \cdot d)$.

\subsection{Tree Classification}

The computation tree classification process employs both a prototype-based classifier and a linear classifier. The prototype-based classifier constructs prototypes from a memory bank, which incurs a complexity of $\gO(|\gT|)$. It then classifies instances by measuring their distances to these prototypes, resulting in a complexity of $\gO(|\gT| \cdot |C| \cdot d)$, where $|C|$ represents the number of classes. On the other hand, the linear classifier incurs a complexity of $\gO(|\gT| \cdot d)$. Consequently, the total computational complexity can be approximated as $\gO(|\gT| \cdot |C| \cdot d)$.

\section{More Analysis}
\label{sec:more discussion}

\subsection{The Difference Between Computation Tree and Subgraph}

Our concept of computation trees is closely aligned with \citep{chuang2022tree}, representing tree-like patterns derived from unfolding the message passing process. Encoding the computation trees of a node is equivalent to encoding the node itself via message passing GNNs, implying that the information in computation trees can be fully learned by basic GNNs, demonstrating both learnability and efficiency in encoding computation trees. Notably, computation tree can be reinterpreted as a special pattern preserved on the ego-graph of the target node, differing from junction trees \citep{jin2018junction} or H-trees \citep{talak2021neural}, which construct additional tree-like graphs to complement the original graph.

Subgraphs, on the other hand, are graph-like substructures within the original graph, such as motifs in molecule graphs. \citet{sun2023all} identifies subgraphs as basic patterns across graph-related tasks and reformulates these tasks into subgraph-level tasks. For example, in node classification, they extract the ego-graph around each node and assign the label of the induced graph as the label of the center node, converting node classification into subgraph classification. This process involves (1) extracting ego-graphs around task-relevant nodes and (2) using GNNs to learn graph-level embeddings for classification. However, this extraction process introduces additional time consumption and increased memory usage for storing induced subgraphs. More importantly, the information in subgraphs is not always learnable by basic GNNs, as they cannot detect some critical substructures necessary for learning graph-level embeddings \citep{chen2020can,zhang2024beyond}, reducing the feasibility of using subgraphs to define graph vocabularies.

We provide empirical analysis for better understanding. Efficiency analysis is presented in Figure \ref{fig:compare tree subgraph efficiency}. Subgraphs generally incur an extra 1/3 time consumption compared to computation trees and encounter out-of-memory errors when batch size exceeds 2048, compared to 8192 for computation trees. The performance comparison is shown in Table \ref{tab:compare tree subgraph}, where the subgraph version (GFT - Subgraph) performs worse than the computation tree version (GFT). We use GAT and GraphMAE as additional baselines and apply linear classifiers on all models for a fair comparison.

\begin{table}[!h]
  \centering
  \caption{The comparison between computation trees and subgraphs.}
  \label{tab:compare tree subgraph}
  \resizebox{0.5\linewidth}{!}{
    \begin{tabular}{lcccc}
      \toprule
                     & \textbf{Node} & \textbf{Link} & \textbf{Graph} & \textbf{Avg.} \\ \midrule
      GAT            & 74.69         & 84.55         & 67.83          & 75.44         \\
      GraphMAE       & 73.98         & 82.15         & 62.17          & 73.07         \\ \midrule
      GFT - Subgraph & 74.23         & 86.49         & 67.89          & 76.13         \\
      GFT - Tree     & 75.51         & 88.99         & 72.21          & 78.06         \\ \bottomrule
    \end{tabular}
  }
\end{table}

\begin{figure}[!h]
  \centering
  \subfloat[The memory allocation in an A40 (48GB) GPU.]{\includegraphics[width=0.5\linewidth]{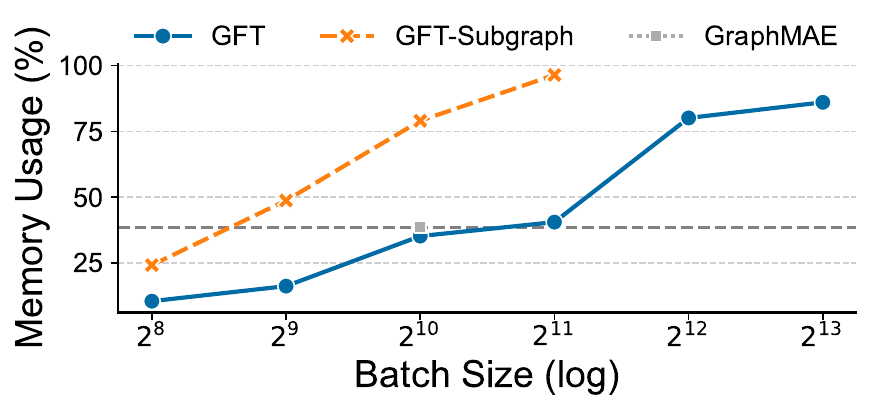}}
  \subfloat[The time consumption per epoch. ]{\includegraphics[width=0.5\linewidth]{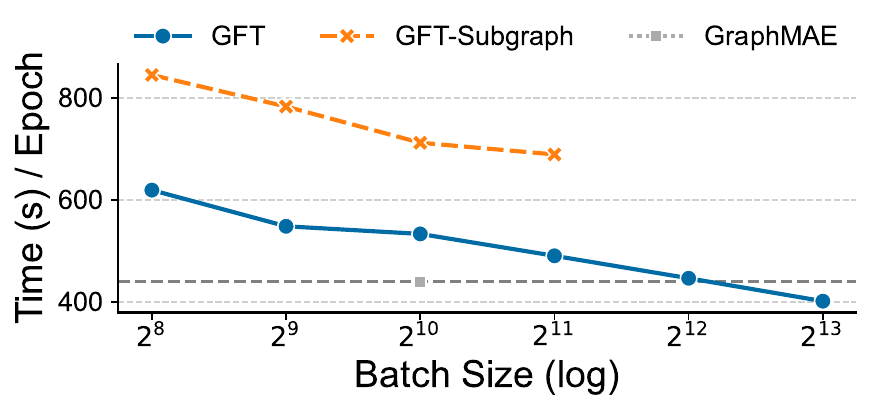}}
  \caption{The efficiency analysis between computation trees and subgraphs. Our GFT is based on the computation trees and we further replace the computation trees with subgraphs called GFT-Subgraph. We compare their memory usage (a) and time consumption (b) during pretraining. With the increase of batch sizes, Subgraph-based GFT encounters out-of-memory, yet computation tree-based GFT can still fit in the GPU.}
  \label{fig:compare tree subgraph efficiency}
\end{figure}

\subsection{Preventing Vocabulary Collapse}
\label{sec:analysis index collapse}

Another challenge in developing a robust discrete vocabulary is known as vocabulary collapse (or codebook collapse in VQ). The commitment loss (Equation \ref{eq:pretrain}) effectively prevents this issue by aligning the quantized tree embeddings with the token space \citep{van2017neural}. Furthermore, we have empirically discovered that using Euclidean distance to query tree tokens leads to vocabulary collapse. Consequently, we have switched to Cosine distance to enforce querying within a hyper-sphere space, thereby enhancing training stability \citep{yu2022vectorquantized}. Alternatively, other techniques such as expiring stale codes \citep{zeghidour2021soundstream} or affine re-parameterization \citep{huh2023straightening} (not evaluated in this paper) can also be employed to mitigate this problem.

\subsection{Scaling to Large-scale Graphs}

Due to the emergence of large-scale graphs \citep{hu2020open, hu2021ogb}, efficient training often requires the use of mini-batches. We facilitate mini-batch training through subgraph sampling. In pre-training, we employ basic subgraph sampling techniques \citep{hamilton2017inductive} to extract a smaller graph from the original graph and then extract computation trees for each node within this subgraph. This method serves as an additional topology augmentation, further enhancing the diversity of computation trees through re-sampling. In the fine-tuning phase, subgraph sampling remains effective for the linear classifier, as it directly processes the computation tree. However, the prototype-based classifier, which requires the aggregation of instances with identical labels to form class prototypes, faces efficiency challenges in this mini-batch training setting. To address this, we randomly sample a small subset of the training set for each class to construct the memory bank $\sS$. Based on our empirical observations, a limited number of samples per class suffices to achieve desirable performance.

\subsection{Discussion on Homophily and Heterophily}

Homophily and heterophily \citep{ma2021homophily} are both critical properties for node-level tasks. The primary distinction between these types of graphs is that identical connectivity patterns can indicate different meanings. We consider our model is also effective for heterophily graphs. Although we only evaluate the performance of \textsc{GFT} on homophily graphs (Cora, PubMed, WikiCS, Arxiv), two considerations support its applicability to heterophily graphs: (i) The analysis of computation tree transferability shows that, similar to homophily, higher computation tree similarity in heterophily graphs correlates with enhanced transferability, matching the principle of our \textsc{GFT}. (ii) Our proposed computation tree classification in fine-tuning can adaptively reinterpret the patterns encoded in the tree vocabulary across various tasks. We will leave the experiments on heterophily graphs in the future work.

\subsection{Comparison to VQGraph}

The major connection between GFT and VQGraph \citep{yang2024vqgraph} is the usage of vector quantization in learning a discrete vocabulary for downstream tasks. However, there are four major differences between GFT and VQGraph. (1) Model Objective: GFT focuses on building a general task reasoner, but VQGraph aims to train a structure-aware MLP for efficient inference. (2) Pretrain Dataset: GFT is pre-trained on cross-domain and cross-task datasets to acquire transferable patterns among graphs, but VQGraph is pre-trained on a single dataset to better capture the structural information. (3) Usage of Tokens: GFT treats tokens as specific transferable patterns, using them directly to build classifiers. VQGraph, on the other hand, treats tokens as external structural knowledge to complement the training of MLP classifiers. (4) Downstream Tasks: GFT can be applied to various graph-related tasks with different settings like few-shot and zero-shot learning. VQGraph is designed for node classification with a basic pre-training and fine-tuning setting.

\subsection{Comparison to LLM-based Methods}

Recent researches \citep{fatemi2023talk,guo2023gpt4graph,wang2024can} utilize LLMs to reformulate graph-related tasks as question answering, transforming graph datasets into sentence structures and leveraging the inference capabilities of LLMs to implicitly infer structural knowledge from the original graphs. This approach exploits the transferable patterns in the word vocabulary of LLMs to reinterpret the transferable patterns on graphs. The main challenges include (i) aligning the transformed graphs (sentences) with the word vocabulary of LLMs, and (ii) employing LLMs to infer essential structural knowledge for graph-structured data. Due to these challenges, existing methods often fall short in handling graph datasets with LLMs, resulting in inconsistent performance. Unlike these approaches, which entirely abandon GNNs, we utilize a GNN as an encoder to analyze transferable patterns on graphs. We consider these LLM-based approaches as complementary to our work.

\subsection{Comparison to Subgraph-based GFMs}

\begin{wraptable}{r}{0.25\textwidth}
  \centering
  \caption{Comparison of number of parameters across different models}
  \label{tab:number of param}
  \resizebox{\linewidth}{!}{
    \begin{tabular}{l c}
      \toprule
      \textbf{Model}               & \textbf{\# Params} \\ \midrule
      Prodigy                      & 2M                 \\
      OFA                          & 29M                \\
      UniGraph                     & 180M               \\ \midrule
      \rowcolor{Gray} \textsc{GFT} & 7M                 \\
      \bottomrule
    \end{tabular}
  }
\end{wraptable}

Several studies \citep{huang2023prodigy, liu2024one} identify subgraphs as transferable patterns across graphs, unifying graph-related tasks into subgraph classification tasks and explicitly extracting subgraphs for classification. However, this extraction process incurs additional time and memory costs due to overlapping nodes in the extracted subgraphs. Moreover, other research \citep{garg2020generalization, zhang2024beyond} suggests that certain substructure or motif patterns within subgraphs are not learnable by basic message-passing GNNs. Unlike these methods, our \textsc{GFT} treats computation trees as transferable patterns, offering advantages over these GFMs in both respects. Firstly, \textsc{GFT} does not require the explicit extraction and encoding of computation trees, instead employing message passing to inherently processes computation trees rooted at all nodes, ensuring efficiency in both time and memory. Furthermore, the computation tree can be seen a unique subgraph structure, which is fully learnable by GNNs without information loss.

In addition, we also compare the number of parameters of these GFMs in Table \ref{tab:number of param}. Considering the number of parameters, Prodigy \citep{huang2023prodigy} has 2 million parameters, while OFA \citep{liu2024one} has 29 million since the use of more GNN layers. UniGraph \citep{he2024unigraph} has 180 million parameters, primarily due to its explicit integration with LLMs in encoding node features in an end-to-end way. Our \textsc{GFT} consistently maintains 7 million parameters during both pre-training and fine-tuning phases, making it comparable to Prodigy but significantly fewer than OFA and UniGraph.

\subsection{Detailed Illustration of Computation Tree Reconstruction}

See Figure \ref{fig:tree recon}.

\begin{figure}[!ht]
  \includegraphics[width=\textwidth]{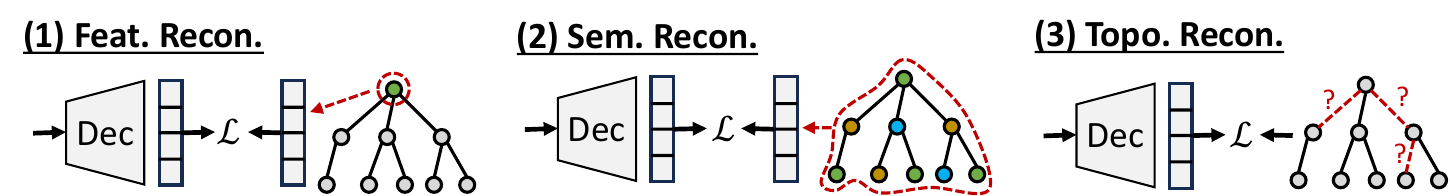}
  \caption{The detailed illustration of tree reconstruction tasks at three levels.}
  \label{fig:tree recon}
\end{figure}

\section{Proofs}
\label{sec:proof}

\subsection{Proof for Theorem \ref{thm:tree}}
\label{sec:proof tree}

We restate Theorem \ref{thm:tree} from the main paper as below.

\begin{theorem}[Transferability of Computation Tree]
  Given two $L$-layer computation trees $\gT_{v_1}, \gT_{v_2}$ derived from the graph $\gG$ and a GNN encoder $\phi$ with parameters $\mW = (\mW_1, \mW_2)$, the Euclidean distance between the tree embeddings $\Delta \triangleq \| \phi(\gT_{v_1}) - \phi(\gT_{v_2}) \|_2$ is bounded as follows:
  \begin{equation}
    \Delta \leq \gC_1 \| \vx_{v_1} - \vx_{v_2} \|_2 + \gC_2 \sum_{j \in \gN(v)} \Delta_{v_1, v_2, j}^{L-1}
    \leq 2 \gB_\vx (\gC_1 + \sum_{l=1}^{L} \gC_2^l D_l)
    \leq 2 \gB_\vx \frac{\gC_1 - (\gC_2 d)^L}{1 - \gC_2 d}.
  \end{equation}
  where $\Delta_{v_1, v_2, j}^{L-1}$ represents the distance between the $L-1$-layer subtrees of the $j$-th children of nodes $v_1$ and $v_2$, and constants $\gC_1 = \gC_\sigma\gB_{\mW_1}$ and $\gC_2 = \gC_\sigma \gC_\rho \gC_g \gB_{\mW_2}$. Here $\gC_\sigma, \gC_\rho, \gC_g$ are Lipschitz terms for GNN components, and $\gB_{\mW_1}, \gB_{\mW_2}, \gB_\vx$ denote bounded norms of $\mW_1, \mW_2, \vx$, respectively. The variable $d_l$ indicates the number of children in the $l$-layer subtrees, with each $d_l \leq d$, and $D_l = d_l d_{l-1} ... d_1$.
\end{theorem}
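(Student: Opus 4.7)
The plan is to prove the stated bound by a standard Lipschitz-recursion argument on the unrolled message-passing computation, exploiting the tree structure to telescope the bound from root to leaves.

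First, I would fix an explicit form for the GNN update at each layer, of the type $h_v^{(l)} = \sigma\bigl(\mW_1 h_v^{(l-1)} + \mW_2 \,\rho\bigl(\{g(h_u^{(l-1)}) : u \in \gN(v)\}\bigr)\bigr)$, and record the Lipschitz constants $\gC_\sigma, \gC_\rho, \gC_g$ of $\sigma, \rho, g$ along with the operator-norm bounds $\gB_{\mW_1}, \gB_{\mW_2}$ on the weight matrices. Applying the triangle inequality after the outer $\sigma$, and then submultiplicativity of norms together with the Lipschitz property of the aggregator $\rho \circ g$, yields a single-step recursion of the form
\begin{equation}
  \|h_{v_1}^{(l)} - h_{v_2}^{(l)}\|_2
  \leq \gC_\sigma \gB_{\mW_1} \,\|h_{v_1}^{(l-1)} - h_{v_2}^{(l-1)}\|_2
  + \gC_\sigma \gC_\rho \gC_g \gB_{\mW_2} \sum_{j \in \gN(v)} \|h_{u_{1,j}}^{(l-1)} - h_{u_{2,j}}^{(l-1)}\|_2.
\end{equation}
Identifying the embedding at layer $L$ with $\phi(\gT_v)$ and the embedding at the leaves with the raw features $\vx_v$, this is exactly the first inequality in the theorem with $\gC_1 = \gC_\sigma \gB_{\mW_1}$ and $\gC_2 = \gC_\sigma \gC_\rho \gC_g \gB_{\mW_2}$, provided one interprets $\Delta_{v_1,v_2,j}^{L-1}$ as the distance between the $(L-1)$-layer subtree embeddings rooted at the $j$-th children.

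Next, I would iterate this recursion $L$ times along the tree. At each level the "self" contribution contributes a factor proportional to $\gC_1$ times a feature distance at some descendant node, while the "children" contribution multiplies by $\gC_2$ and fans out over the $d_l$ children at that level. Collecting terms by depth and bookkeeping the number of descendants at depth $l$ as $D_l = d_1 d_2 \cdots d_l$, one obtains a sum of feature-distance terms weighted by $\gC_1 \gC_2^l D_l$. Bounding each feature distance $\|\vx_{v_1} - \vx_{v_2}\|_2 \leq 2\gB_\vx$ via the triangle inequality on the bounded features gives the second inequality $\Delta \leq 2\gB_\vx \bigl(\gC_1 + \sum_{l=1}^L \gC_2^l D_l\bigr)$. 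Finally, replacing each $d_l$ by its uniform upper bound $d$ turns the sum into a geometric series, and summing it in closed form yields the third bound $2\gB_\vx \,\tfrac{\gC_1 - (\gC_2 d)^L}{1 - \gC_2 d}$ (assuming $\gC_2 d \neq 1$).

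The main obstacle I anticipate is the level-by-level bookkeeping during the unrolling: one must be careful to pair up children of $v_1$ and $v_2$ consistently when applying the Lipschitz bound on the aggregator $\rho$, and to track how the "self" term at depth $l$ spawns a feature-distance term at depth $l$ rather than at the root. Once this pairing and the counts $D_l$ are set up cleanly, the remaining steps (triangle inequality, geometric-series summation, and the $2\gB_\vx$ substitution) are routine, and the constants $\gC_1, \gC_2$ follow directly from collecting the Lipschitz factors in the single-step recursion.
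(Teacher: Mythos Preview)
Your approach is essentially the paper's: single-step Lipschitz recursion on the GNN update, unroll by depth, bound all leaf-level feature distances by $2\gB_\vx$, then sum a geometric series. Two small gaps relative to the paper's argument are worth flagging. First, the paper's GNN writes the self-term at each layer as $\mW_1 \vx_v$ using the \emph{raw} feature of the root (not the previous-layer embedding $h_v^{(l-1)}$ as in your display); this is why the first inequality in the theorem has $\gC_1\|\vx_{v_1}-\vx_{v_2}\|_2$ directly rather than a term that itself needs further unrolling, and it changes the coefficient bookkeeping in the middle bound. Second, the pairing obstacle you correctly anticipate is handled in the paper by an explicit device: both computation trees are padded with ``non-sense branches'' (virtual children carrying zero feature vectors) so that after alignment $|\gN(v_1)|=|\gN(v_2)|$ at every level and the sum $\sum_{j\in\gN(v)}$ becomes well-defined; without this padding step the Lipschitz bound on $\rho$ over mismatched child sets does not go through cleanly. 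Once you adopt these two adjustments, your outline coincides with the paper's proof.
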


\begin{proof}
  We calculate the embedding distance between two $L$-layer computation trees generated from a single GNN encoder $\phi$ with parameters $\mW = (\mW_1, \mW_2)$. Here we use a GraphSAGE-like encoder, as described in the Appendix \ref{sec:gnn encoder}, that $\mW_1$ transforms the target node, while $\mW_2$ transforms the neighboring nodes. For simplicity, we assume that all GNN layers share the same parameters. Without loss of generality, this assumption does not affect the validity of our proofs. The term $\vx_v$ represents the features of node $v$, and $\gN(v)$ denotes the set of direct neighborhood in the graph, which correspond to the children of node $v$ in the computation tree $\gT_v$.

  With a bit of notation abuse, we define the GNN as:
  \begin{equation}
    \vz_v = \phi(\gT_v) = \sigma \Bigl(\mW_1 \vx_{v} + \mW_2 \rho \Bigl(\sum_{j \in \gN(v)} g(\gT^{L-1}_j(\mW)) \Bigr) \Bigr)
  \end{equation}
  where $\sigma$ as the non-linear activation function, $\rho$ as the permutation-invariant aggregator function, and $g$ as the update function ($\rho$ and $g$ are all based on neural networks). To simplify notation, we denote the computation tree embeddings by $\gT(\mW) = \phi(\gT)$. Since these functions and neural networks exhibit Lipschitz continuity, we represent their Lipschitz constants as $\gC_\sigma$, $\gC_\rho$, and $\gC_g$, respectively. Additionally, we assume that the norm of node features is bounded by $\| \mW_1 \| \leq \gB_{\mW_1}$, and the norms of model weights by $\| \mW_1 \| \leq \gB_{\mW_1}$ and $\| \mW_2 \| \leq \gB_{\mW_2}$.

  Given the absence of constraints on the tree structures, we manually align the structures of the two trees by incorporating non-sense nodes and edges, as depicted in Figure \ref{fig:non-sense branch}. Initially, the structures of tree 1 and tree 2 are entirely distinct, as illustrated by solid lines. By integrating non-sense branches, we ensure both trees have the same structure, with three branches per node in the first layer and two in the second. These non-sense branches, considered as virtual branches, are purely for theoretical analysis convenience and hold no inherent meaning, similar to the approach in \citep{chuang2022tree}. Consequently, we assume the node features of each non-sense node to be a zero vector. This alignment of tree structures enhances the coherence of subsequent analyses.

  \begin{figure}[!h]
    \centering
    \includegraphics[width=0.5\textwidth]{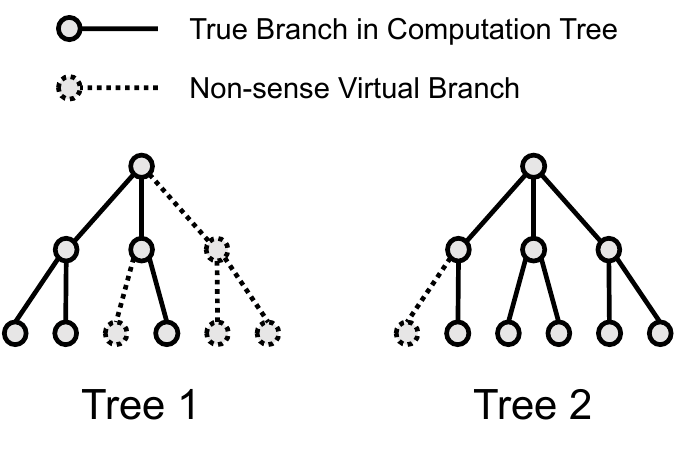}
    \caption{Adding non-sense branches to computation trees to align their structures.}
    \label{fig:non-sense branch}
  \end{figure}

  Following, we expand the stability term $\Delta$:
  \begin{align}
    \label{eq:tree bound}
    \nonumber \Delta & \triangleq \biggl\| \gT_{v_1}^L(\mW) - \gT_{v_2}^L(\mW) \biggr\|_2                                                                                                                                                 \\
    \nonumber        & = \biggl\| \sigma(\mW_1 \vx_{v_1} + \mW_2 \rho(\sum_{i \in \gN(v_1)} g(\gT^{L-1}_i(\mW)))) - \sigma(\mW_1 \vx_{v_2} + \mW_2 \rho(\sum_{j \in \gN(v_2)} g(\gT^{L-1}_j(\mW)))) \biggr\|_2                            \\
    \nonumber        & \leq \gC_\sigma \biggl\| \mW_1 \vx_{v_1} + \mW_2 \rho(\sum_{i \in \gN(v_1)} g(\gT^{L-1}_i(\mW))) - \mW_1 \vx_{v_2} - \mW_2 \rho(\sum_{j \in \gN(v_2)} g(\gT^{L-1}_j(\mW))) \biggr\|_2                              \\
    \nonumber        & \leq \gC_\sigma \biggl\| \mW_1 \vx_{v_1} - \mW_1 \vx_{v_2} \biggr\| + \gC_\sigma \biggl\| \mW_2 \rho(\sum_{i \in \gN(v_1)} g(\gT^{L-1}_i(\mW))) - \mW_2 \rho(\sum_{j \in \gN(v_2)} g(\gT^{L-1}_j(\mW))) \biggr\|_2 \\
                     & \leq \gC_\sigma\gB_{\mW_1} \biggl\| \vx_{v_1} - \vx_{v_2} \biggr\|_2 + \gC_\sigma \gB_{\mW_2} \biggl\| R(\mW, \gT_{v_1}^L) - R(\mW, \gT_{v_2}^L)\biggr\|_2,
  \end{align}
  where $R(\mW, \gT_{v}^L) = \rho(\sum_{j \in \gN(v)} g(\gT^{L-1}_j(\mW)))$. We can further bound the term as:
  \begin{align}
    \biggl\| R(\mW, \gT_{v_1}^L) - R(\mW, \gT_{v_2}^L)\biggr\|_2 \leq \gC_\rho \biggl\| \sum_{i \in \gN(v_1)} g(\gT^{L-1}_i(\mW)) - \sum_{j \in \gN(v_2)} g(\gT^{L-1}_j(\mW)) \biggr\|_2
  \end{align}
  As we already align the structures of two computation trees by adding non-sense branches to ensure $|\gN(v)| = |\gN(v_1)| = |\gN(v_2)|$, we can merge the two terms in the RHS:
  \begin{align}
    \nonumber \biggl\| R(\mW, \gT_{v_1}^L) - R(\mW, \gT_{v_2}^L)\biggr\|_2 & \leq \gC_\rho \biggl\| \sum_{j \in \gN(v)} g(\gT^{L-1}_{v_1, j}(\mW)) - \sum_{j \in \gN(v)} g(\gT^{L-1}_{v_2,j}(\mW)) \biggr\|_2 \\
    \nonumber                                                              & \leq \gC_\rho \sum_{j \in \gN(v)} \biggl\| g(\gT^{L-1}_{v_1, j}(\mW)) - g(\gT^{L-1}_{v_2,j}(\mW)) \biggr\|_2                     \\
    \nonumber                                                              & \leq \gC_\rho \gC_g \sum_{j \in \gN(v)} \biggl\| \gT^{L-1}_{v_1, j}(\mW) - \gT^{L-1}_{v_2,j}(\mW) \biggr\|_2                     \\
                                                                           & \leq \gC_\rho \gC_g \sum_{j \in \gN(v)} \Delta_{v_1, v_2, j}^{L-1},
  \end{align}
  where $\Delta_{v_1, v_2, j}^{L-1} = \| \gT^{L-1}_{v_1, j}(\mW) - \gT^{L-1}_{v_2,j}(\mW) \|_2$.

  We now establish a bound on the distance between two computation trees of identical structure by analyzing the node-wise differences from bottom to top. Denote the number of branches (i.e., children) at each $l$-layer as $d_l$, we simplify this bound as follows:
  \begin{align}
    \label{eq:tree bound sub}
    \biggl\| R(\mW, \gT_{v_1}^L) - R(\mW, \gT_{v_2}^L)\biggr\|_2 \leq \gC_\rho \gC_g d_{L-1} \max_{j \in \gN(v)} \Delta_{v_1, v_2, j}^{L-1}.
  \end{align}
  This bound prioritizes the most influential children of a node to dominate all other branches. By combining Equation \ref{eq:tree bound} with Equation \ref{eq:tree bound sub}, we recursively establish the bound of the distance between two computation trees:
  \begin{align}
    \nonumber \Delta & \leq \gC_\sigma\gB_{\mW_1} \biggl\| \vx_{v_1} - \vx_{v_2} \biggr\|_2 + \gC_\sigma \gB_{\mW_2} \gC_\rho \gC_g d_{L-1} \max_{j \in \gN(v)} \Delta_{v_1, v_2, j}^{L-1} \\
                     & \leq \gC_1 \biggl\| \vx_{v_1} - \vx_{v_2} \biggr\|_2 + \gC_2 d_{L-1} \max_{j \in \gN(v)} \Delta_{v_1, v_2, j}^{L-1},
  \end{align}
  where $\gC_1 = \gC_\sigma\gB_{\mW_1}$ and $\gC_2 = \gC_\sigma \gB_{\mW_2} \gC_\rho \gC_g $.

  Without loss of generality, we consider the distance between the original computation trees as the distance between the $L$-layer computation trees rooted at nodes $v_1$ and $v_2$, denoted as $\Delta = \Delta_{v_1, v_2}^L$. This allows us to recursively bound the distance. Given that all $\vx$ are bounded by $\| \vx \|_2 \leq \gB_\vx$, the distance between the node features $\vx_{v_1}$ and $\vx_{v_2}$ satisfies $\| \vx_{v_1} - \vx_{v_2} \|_2 \leq 2 \gB_\vx$ by the triangle inequality. Consequently, we can further develop the recursion as follows:
  \begin{align}
    \nonumber \Delta & \leq \gC_1 \biggl\| \vx_{v_1} - \vx_{v_2} \biggr\|_2 + \gC_2 d_{L-1} \max_{j \in \gN(v)} \Delta_{v_1, v_2, j}^{L-1} \\
                     & \leq 2 \gB_\vx (\gC_1 + \sum_{l=1}^{L} \gC_2^l D_l),
  \end{align}
  where $D_l = d_l d_{l-1} ... d_1$.

  Assuming that the number of branches (i.e., children) at each $l$-layer does not exceed the maximum number of branches in the tree, such that $d_1, ..., d_L \leq d$. We can further simplify the recursion by:
  \begin{align}
    \nonumber \Delta & \leq \gC_1 \biggl\| \vx_{v_1} - \vx_{v_2} \biggr\|_2 + \gC_2 \sum_{j \in \gN(v)} \Delta_{v_1, v_2, j}^{L-1} \\
    \nonumber        & \leq 2 \gB_\vx (\gC_1 + \sum_{l=1}^{L} \gC_2^l D_l)                                                         \\
                     & \leq 2 \gB_\vx \frac{\gC_1 - (\gC_2 d)^L}{1 - \gC_2 d}.
  \end{align}

\end{proof}

\subsection{Proof for Theorem \ref{thm:code}}

Before proving Theorem \ref{thm:code}, it is necessary to first establish a more general version of the theorem, as detailed below:

\begin{theorem}
  \label{thm:code general}
  Given a set of instances $\{(\vx_i, y_i)\}_{i=1}^n$ sampled from the distribution $\gP$, and a margin-aware prototype classifier $f$ that predicts the class of instances via a distance measure. The risk $\gR(f)$ can be bounded with probability $1 - \delta$:
  \begin{equation}
    \gR(f) \leq \hat{\gR}(f) + \frac{20 \cdot \gC \cdot p (p - 1) \cdot \gB^3 \cdot \sqrt{n}}{\rho \cdot n} + \sqrt{\frac{\ln (2 / \delta)}{2 n}},
  \end{equation}
  where $\hat{\gR}(f)$ is the empirical risk, $p$ denotes the number of tokens, $\gC$ is a constant, $\gB$ is the bounded norm of $\vx$ and $\vp$, and $\rho$ acts as the margin, serving as a penalty factor in evaluating the distance between computation trees and tokens.

\end{theorem}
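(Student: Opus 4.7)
The plan is to follow the standard Rademacher-complexity based generalization bound for margin-aware multi-class classification, specialized to the prototype classifier induced by vector quantization. The classifier $f$ predicts $\argmin_k d(\vx, \vp_k)$, and its margin at $(\vx, y)$ can be written as $\rho_f(\vx, y) = \min_{k \neq y}[d(\vx, \vp_k) - d(\vx, \vp_y)]$. Introducing the $\rho$-margin surrogate loss $\Phi_\rho$, which is $(1/\rho)$-Lipschitz and upper bounds the $0$--$1$ loss, lets me relate the true risk $\gR(f)$ to the smooth empirical surrogate $\hat{\gR}_\rho(f)$, which in turn is bounded by $\hat{\gR}(f)$ in the statement.

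First, I would apply McDiarmid's inequality to the uniform deviation $\sup_{f \in \gF}(\gR(f) - \hat{\gR}_\rho(f))$. Since each sample $(\vx_i, y_i)$ changes the empirical surrogate by at most $1/n$, this delivers the concentration term $\sqrt{\ln(2/\delta)/(2n)}$ with probability at least $1 - \delta$. A standard symmetrization argument then bounds the expectation of the supremum by $2\,\mathrm{Rad}_n(\Phi_\rho \circ \gF)$, where $\gF$ is the hypothesis class of margin functions induced by $p$ learnable prototypes with $\|\vp_k\| \leq \gB$ on inputs with $\|\vx\| \leq \gB$.

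Next, I would invoke the Ledoux--Talagrand contraction inequality to peel off the $(1/\rho)$-Lipschitz surrogate, producing the $1/\rho$ factor in front of $\mathrm{Rad}_n(\gG)$, where $\gG$ is the class of pairwise distance differences $g_{j,k}(\vx) = d(\vx, \vp_k) - d(\vx, \vp_j)$. Because the margin involves a minimum over the $p - 1$ competing prototypes and a choice over the $p$ possible true labels, I would bound the Rademacher complexity of the induced class by a sup/union argument over all $p(p-1)$ ordered pairs, each pair contributing a complexity of order $\gC \gB^3 \sqrt{n}/n$. The $\gB^3$ factor arises from expanding $\|\vx - \vp\|^2$ and using the bounds on $\|\vx\|$ and $\|\vp\|$ together with Khintchine-type estimates on the Rademacher averages of the resulting cross and quadratic terms; the $\sqrt{n}$ is the usual empirical Rademacher sum growth. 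Combining the three pieces yields the stated bound, with the constant $20$ absorbing the numerical factors from symmetrization, contraction, and the pairwise expansion.

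The main obstacle will be obtaining the precise $p(p-1) \gB^3 \sqrt{n}$ scaling for the Rademacher complexity of the pairwise distance-difference class. A naive sup-over-pairs union bound is loose, while a full chaining argument would complicate the constants; the cleanest route is an explicit Cauchy--Schwarz expansion of the squared distance combined with the norm bounds on $\vx$ and $\vp$ to produce the $\gB^3$ factor, followed by a union bound over the $p(p-1)$ ordered pairs. Care must also be taken so that prototype vectors are treated as free parameters of the hypothesis class rather than fixed, or else the complexity collapses and the $p(p-1)$ factor disappears; in addition, the reduction from the vector quantization procedure to a Crammer--Singer style margin classifier has to be made precise so that the classical multi-class margin machinery can be applied verbatim.
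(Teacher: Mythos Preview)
Your proposal is correct and follows essentially the same route as the paper: a margin-based complexity bound yielding the $\sqrt{\ln(2/\delta)/(2n)}$ concentration term, a decomposition of the hypothesis class into the $p(p-1)$ pairwise distance-difference classes $d_i - d_j$, and a per-pair complexity bound of order $\gB^3\sqrt{n}/n$ obtained by expanding $\|\vx-\vp_i\|^2 - \|\vx-\vp_j\|^2 = 2\vx^T(\vp_j-\vp_i) + \|\vp_i\|^2 - \|\vp_j\|^2$ and applying the norm bounds. The only cosmetic difference is that the paper invokes Gaussian complexity and cites Theorems 7, 16, Lemma 22, and Theorem 11 of Bartlett--Mendelson (2002) directly, whereas you reconstruct the same machinery from McDiarmid, symmetrization, and Ledoux--Talagrand contraction with Rademacher averages; the two are equivalent up to the absorbed constants.
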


\begin{proof}
  Given a set of tokens (prototypes), margin-based classification involves using instances to identify the nearest tokens and assigning the labels of these nearest tokens to the target instances. We denote the set of $p$ tokens by $\{ \vp_i \}_{i=1}^p$, each with a norm bounded by $\gB$, and the labels associated with each token by $\{ c_i \}_{i = 1}^p$. Given an instance $(\vx_i, y_i)$ sampled from a distribution $\gP$, the classifier can be defined as follows:
  \begin{equation}
    f(\vx) := c_j, \text{ where } j = \argmin_j \| \vp_j -  \vx \|_2^2,
  \end{equation}
  where $\vx \in \R^{d}$ is a random variable with a norm also bounded by $\gB$, consistent with the norms of the tokens.

  In this proof, we focus on a binary classification problem with only two classes, $\{ -1, 1 \}$. Consequently, the function can be represented as $f: \R^d \to \{ -1, 1 \}$. Without loss of generality, this binary classification setting can be readily extended to multi-class scenarios using one-versus-all or one-versus-one strategies \citep{bartlett2002rademacher}. We then define a class of functions as follows:
  \begin{equation}
    \gF = \{ f: \R^n \to \{ -1, 1 \} \}.
  \end{equation}

  We denote $d_{c_+}$ as the distance to the nearest tokens with the same label $y_i = c_{+}$ and $d_{c_-}$ as the distance to the closest tokens with the different label $y_i = c_{-}$. If $d_{c_+}$ is less than $d_{c_-}$, the instance is correctly classified. Thus, the classification margin is defined as:
  \begin{equation}
    \gM_f(x, y) := - d_{c_+} + d_{c_-},
  \end{equation}
  where a positive value indicates correct classification. Moreover, we introduce a penalty term to estimate the classification margin, defined as:
  \begin{equation}
    \mathcal{L}_{\mathcal{M}}(t) :=
    \begin{cases}
      1                  & \text{if } t \leq 0,        \\
      1 - \frac{t}{\rho} & \text{if } 0 < t \leq \rho, \\
      0                  & \text{if } t > \rho,
    \end{cases}
  \end{equation}
  where $\rho > 0$ is a pre-defined margin threshold.

  For this classifier, the risk $\gR(f)$ and the corresponding empirical risk $\hat{\gR}(f)$ is defined as:
  \begin{align}
    \gR(f)       & := \gP(f(x) \neq y),                           \\
    \hat{\gR}(f) & := \frac{1}{n} \sum_{i=1}^n \gL(\gM(f(x), y)).
  \end{align}

  We can establish a Gaussian complexity bound by applying Theorem 7 from \citep{bartlett2002rademacher}, which holds with a probability of at least $1 - \delta$. This is expressed as:
  \begin{equation}
    \gR(f) \leq \hat{\gR}(f) + \frac{2 \gC}{\rho} \cdot G_n (\gF) + \sqrt{\frac{\ln (2 / \delta)}{2 n}},
  \end{equation}
  where $\gC$ represents the Lipschitz constant, and $\rho$ represents the margin. This formulation allows us to explicitly incorporate the prediction margin into the complexity analysis. The term $G_n (\gF)$ denotes the Gaussian complexity defined over the function class $\gF$, and an empirical Gaussian complexity can be estimated as:
  \begin{equation}
    \hat{\gG}_n(\gF) = \E_{\sigma}\Bigl[\sup_{f \in \mathcal{F}} \Bigl|\frac{2}{n} \sum_{i=1}^n \sigma_i f(x_i)\Bigr|\Bigr],
  \end{equation}
  where $\sigma = (\sigma_1, \sigma_2, ..., \sigma_n)$ are independent standard Gaussian random variables, $\sigma_i \sim \gN(0, 1)$.

  It is important to note that the Gaussian complexity of the function class $\gF$ can be bounded by aggregating the complexities of all its sub-classes (Theorem 16 from \citep{bartlett2002rademacher}). In our model, the token classifier leverages $p$ tokens simultaneously; hence, it is logical to define sub-classes of $\gF$ that utilize only two tokens for predictions. We define each sub-class as $\gF_{ij}$, which specifically uses tokens $\vp_i$ and $\vp_j$ with differing labels $c_i \neq c_j$. Consequently, the total number of sub-classes is bounded by $p \cdot (p - 1) / 2$. This allows us to simplify the complexity bound as follows:
  \begin{equation}
    \gR(f) \leq \hat{\gR}(f) + \frac{2 \gC}{\rho} \cdot p \cdot (p - 1) \cdot G_n (\gF_{ij}) + \sqrt{\frac{\ln (2 / \delta)}{2 n}}.
  \end{equation}

  To further simplify the bound, we must derive it for each $G_n (\gF_{ij})$. It is important to note that $\gF_{ij}$ can be regarded as a binary classification function class for $d_i - d_j$, where the weights are bounded:
  \begin{align}
    \nonumber d_{i} - d_{j} & =     \| \vx - \vp_i \|_2^2 - \| \vx - \vp_j \|_2^2                                                      \\
    \nonumber               & = ( \| \vx \|_2^2 + \| \vp_i \|_2^2 - 2 \vx^T \vp_i) - (\| \vx \|_2^2 + \| \vp_j \|_2^2 - 2 \vx^T \vp_j) \\
    \nonumber               & = 2 \vx^T (\vp_j - \vp_i) + \| \vp_i \|_2^2 - \| \vp_j \|_2^2                                            \\
                            & \le 4 \gB^2 + \gB^2 = 5\gB^2.
  \end{align}

  Based on Lemma 22 in \citep{bartlett2002rademacher}, which establishes that the empirical Gaussian complexity is bounded by a kernel function defined by $\gF_{ij}$, we can simplify the empirical Gaussian complexity of each sub-class as follows:
  \begin{equation}
    \hat{G}_n(\gF_{ij}) \leq \frac{10 \cdot \gB^3 \cdot \sqrt{n}}{n}.
  \end{equation}

  The difference between the Gaussian complexity and empirical Gaussian complexity is estimated to be $\epsilon$ with a probability of $2 \cdot \exp(\frac{-\epsilon^2 n}{8})$ (Theorem 11 from \citep{bartlett2002rademacher}). We can simplify the risk as follows:
  \begin{equation}
    \gR(f) \leq \hat{\gR}(f) + \frac{20 \cdot \gC \cdot p (p - 1) \cdot \gB^3 \cdot \sqrt{n}}{\rho \cdot n} + \sqrt{\frac{\ln (2 / \delta)}{2 n}}.
  \end{equation}
\end{proof}

We can readily extend the Theorem \ref{thm:code general} to Theorem \ref{thm:code} in the main paper.

\begin{proof}
  Theorem \ref{thm:code general} establishes bounds on the generalization error of margin-based classifiers using Gaussian complexity. Analogously, vector quantization functions as a margin-based classifier by assigning instances to the nearest tokens in the vocabulary. Specifically, vector quantization utilizes this classifier for clustering, where each cluster center corresponds to a token. We assume each computation tree has a corresponding ground-truth cluster index based on the latent distribution, denoted as $\gP_\gT$, where $(\gT, y) \sim \gP_\gT$. Thus, the vector quantization process employed in the main paper converts to a margin-based classification problem, consistent with Theorem \ref{thm:code general}. Moreover, we can cancel the term $\gB$ since the Cosine distance, used to measure the similarity between tree embeddings and tokens, ensures the bounded norm $\gB = 1$.
\end{proof}

\section{Detailed Analysis on Computation Tree Transferability}

\subsection{Synthetic Dataset}
\label{sec:synthetic data}

\paragraph{Experimental Setting.}

We randomly sample node features from a uniform distribution with a dimension of 4 and conduct experiments 100 times using different seeds to report average performance. Since the synthetic datasets do not have labels for each node, we employ a graph auto-encoder \citep{kipf2016variational} for self-supervised training. The encoder is a basic 2-layer GCN model with a dimension of 4, and the decoder uses a standard inner product approach, computing the inner product between the embeddings of two nodes to determine their linkage. We set the number of training epochs at 200 and use Adam as the optimizer with a learning rate of 1e-3 and a weight decay of 0. To evaluate transferability, we use the inverse of the Central Moment Discrepancy (CMD) \citep{zellinger2017central}, a measure that serves as an indicator of transferability, defined as $\textit{transferability} = \frac{1}{CMD}$. We ensure that the number of blocks in the source and target graphs is the same.

To compute the computation tree similarity (Tree Sim.), we employ the Weisfeiler-Lehman subtree kernel \citep{shervashidze2011weisfeiler}, which evaluate similarity between two graphs by considering the subtrees patterns in terms of both structure and features. To match the capabilities of the 2-layer GNN encoder used, we limit the maximum iterations of the WL subtree kernel to two. Additionally, we mitigate the impact of randomness by randomly sampling node features from a uniform distribution and repeating the process 100 times. For evaluating the motif similarity (Motif Sim.), we utilize the graphlet sampling kernel \citep{prvzulj2007biological} with sampled graphlet size as 5.

\begin{figure}[!t]
  \subfloat{\includegraphics[width=0.33\linewidth]{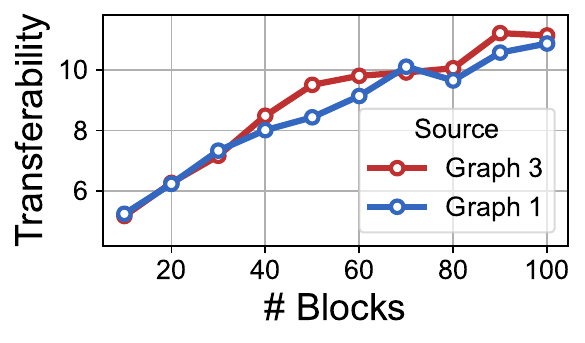}}
  \subfloat{\includegraphics[width=0.33\linewidth]{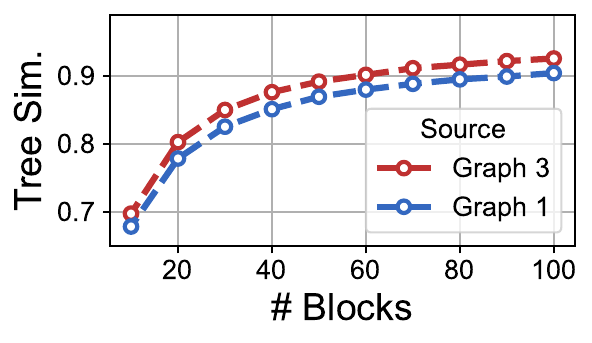}}
  \subfloat{\includegraphics[width=0.33\linewidth]{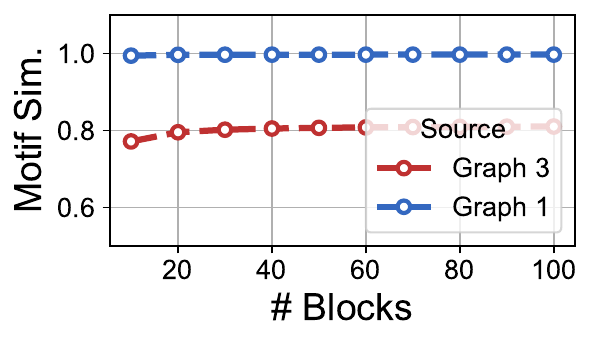}}
  \caption{Transfer performance on synthetic graphs with $\mathcal{G}_2$ as the target graph.}
  \label{fig:syn graph transferability 2}
\end{figure}

\begin{figure}[!t]
  \subfloat{\includegraphics[width=0.33\linewidth]{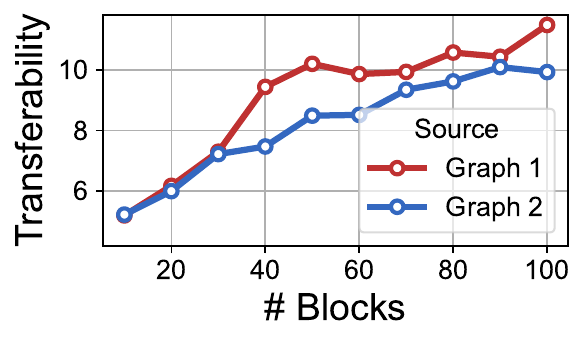}}
  \subfloat{\includegraphics[width=0.33\linewidth]{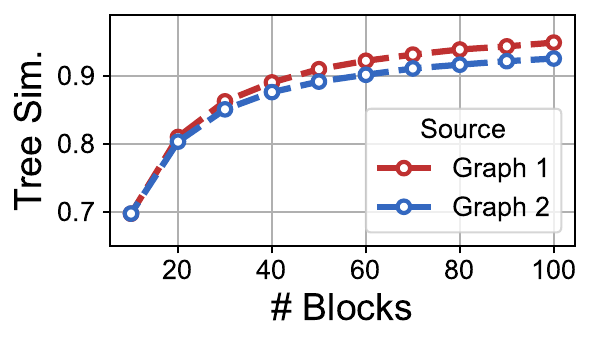}}
  \subfloat{\includegraphics[width=0.33\linewidth]{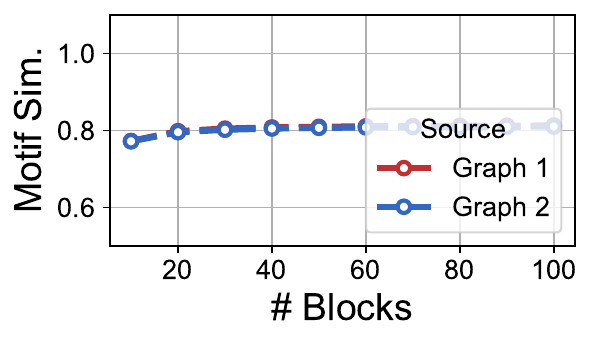}}
  \caption{Transfer performance on synthetic graphs with $\mathcal{G}_3$ as the target graph.}
  \label{fig:syn graph transferability 3}
\end{figure}

\paragraph{Additional Results.}

We present additional results analyzing transferability on synthetic graphs in Figure \ref{fig:syn graph transferability 2} and \ref{fig:syn graph transferability 3}. We observe that higher computation tree similarity correlates with better transferability when using $\gG_2$ and $\gG_3$ as target graphs. However, the impact of motif similarity is marginal. We plan to analyze link-level and graph-level tasks in future work.

\subsection{Real-world Dataset}
\label{sec:real data}

\paragraph{Experimental Setting.}

We conduct transfer learning to evaluate the correlation between transferability and specific graph patterns on real-world graphs. Similar to synthetic datasets, we utilize the Weisfeiler-Lehman subtree kernel and the graphlet sampling kernel to compute tree similarity and motif similarity, respectively. We include homophily Airport graphs, consisting of \texttt{USA}, \texttt{Europe}, and \texttt{Brazil} \citep{ribeiro2017struc2vec}, where each node represents an airport and edges denote flight connections. Nodes are labeled based on airport connectivity levels. Additionally, we also use heterophily graphs \citep{pei2020geom} that represent web links from universities such as \texttt{Cornell}, \texttt{Texas}, and \texttt{Wisconsin}, where nodes are web pages and edges are hyperlinks. The objective is to classify nodes into five categories: categories, student, project, course, staff, and faculty. In our analysis of real graphs, we consider two settings: (1) use randomly sampled node features and (2) use raw node features. This approach will offer more comprehensive insights, as node features are also related to homophily and heterophily.

The experimental settings are detailed as follows. We evaluate the transfer learning performance using a basic 2-layer GCN model with ReLU activation, running the experiments 20 times to report the average results. We pre-train the model on the source graph with 60\% of nodes randomly selected and subsequently fine-tune it on the target graph with 10\% of nodes randomly selected. The hidden dimension is set to 64, and we use the AdamW optimizer with a weight decay of 1e-6. The pre-training learning rate is set at 1e-3 for all settings with 500 pre-train epochs, while the fine-tuning learning rate is set at 1e-2 for heterophily graphs and 5e-4 for homophily graphs. In the random feature setting, we sample node features from a uniform distribution across 64 dimensions. For computing graph similarity with randomly sampled node features, we conduct the experiments 100 times using different seeds. The graphlet kernel samples motifs 10,000 times, and the maximum motif size is set to 5. The maximum iteration of the subtree WL kernel is limited to 2, aligning with the number of GNN layers.

\paragraph{Additional Results.} See Table \ref{tab:transfer real random} for transfer learning performance on graphs with random features. Even though the node features are randomly initialized, we still observe that a high tree similarity correlates with improved transferability.

\begin{table}[!h]
  \centering
  \caption{Transfer learning performance on homophily (above) and heterophily (below) graphs with random features. }
  \label{tab:transfer real random}

  \resizebox{\linewidth}{!}{
    \begin{tabular}{l  cc  cc  cc}
      \toprule
      $\mathcal{G}_{target} \rightarrow$ & \multicolumn{2}{c}{\texttt{Brazil}}  & \multicolumn{2}{c}{\texttt{Europe}} & \multicolumn{2}{c}{\texttt{USA}}                                                                                      \\ \cmidrule(lr){2-3}\cmidrule(lr){4-5}\cmidrule(lr){6-7}
      $\mathcal{G}_{source} \rightarrow$ & \texttt{Europe}                      & \texttt{USA}                        & \texttt{Brazil}                        & \texttt{USA}                & \texttt{Brazil}  & \texttt{Europe}             \\ \midrule
      Motif Sim.                         & \cellcolor{LightBlue}99.01           & 92.65                               & \cellcolor{LightBlue}99.00             & 96.81                       & 92.68            & \cellcolor{LightBlue}96.81  \\
      Acc. / Tree Sim.                   & 39.2 / 52.3                          & \cellcolor{Blue}43.0 / 59.2         & 42.6 / 52.4                            & \cellcolor{Blue}45.1 / 76.2 & 45.4 / 58.9      & \cellcolor{Blue}46.1 / 76.1 \\
      \midrule
      \midrule
      $\mathcal{G}_{target} \rightarrow$ & \multicolumn{2}{c}{\texttt{Cornell}} & \multicolumn{2}{c}{\texttt{Texas}}  & \multicolumn{2}{c}{\texttt{Wisconsin}}                                                                                \\ \cmidrule(lr){2-3}\cmidrule(lr){4-5}\cmidrule(lr){6-7}
      $\mathcal{G}_{source} \rightarrow$ & \texttt{Texas}                       & \texttt{Wisconsin}                  & \texttt{Cornell}                       & \texttt{Wisconsin}          & \texttt{Cornell} & \texttt{Texas}              \\ \midrule
      Motif Sim.                         & 99.97                                & \cellcolor{LightBlue}99.98          & \cellcolor{LightBlue}99.99             & \cellcolor{LightBlue}99.99  & 99.98            & \cellcolor{LightBlue}99.99  \\
      Acc. / Tree Sim.                   & 33.4 / 47.2                          & \cellcolor{Blue}34.3 / 51.4         & 40.9 / 47.2                            & \cellcolor{Blue}42.1 / 51.9 & 35.9 / 51.4      & \cellcolor{Blue}37.4 / 51.8 \\
      \bottomrule
    \end{tabular}
  }
  \label{tab:transfer pattern real random}
\end{table}

\section{Experimental Setup}
\label{sec:exp setting}

\subsection{GNN Encoder}
\label{sec:gnn encoder}

We employ a GraphSAGE-like architecture to encode node and edge features in a graph $\gG = (\gV, \gE)$, where node features are represented as $\mX \in \R^{|\gV| \times d_f}$ and edge features as $\mE \in \R^{|\gE| \times d_e}$. Considering a GNN with $L$ layers, the $(l+1)$-th layer node embedding for node $v$ is given by:
\begin{equation}
  \mH_v^{(l+1)} = \sigma \left( \mW_1^{(l)} \mH_v^{(l)} + \operatorname{ReLU} \left(\sum_{u \in \gN(v)} \mW_2^{(l)} \left( \mH_u^{(l)} + \varphi(\mE_{u,v}) \right) \right) \right),
\end{equation}
where $\mH_v^{(l)}$ represents the node embedding at the $l$-th layer, $\mE_{u, v}$ denotes the edge features between nodes $u$ and $v$, and $\mW_1$ and $\mW_2$ are the learnable matrices. The function $\varphi$, used to align feature dimensions, is chosen as the identity function, $\operatorname{Id}(\cdot)$, in this study. While we utilize the basic GraphSAGE-like framework as the encoder, alternative, more advanced encoders such as graph attention networks \citep{velickovic2018graph} or other expressive GNNs \citep{morris2019weisfeiler} could potentially enhance model performance.

\subsection{Dataset}

\begin{table}[!ht]
  \centering
  \caption{Dataset statistics \citep{liu2024one}.}
  \label{tab:data stats}

  \resizebox{\textwidth}{!}{
    \begin{tabular}{lccccccc}
      \toprule
      \textbf{Dataset}  & \textbf{Domain} & \textbf{Task} & \textbf{\# Graphs} & \textbf{Avg. \#Nodes} & \textbf{Avg. \#Edges} & \textbf{\# Classes} \\ \midrule
      \texttt{Cora}     & Citation        & Node          & 1                  & 2,708                 & 10,556                & 7                   \\
      \texttt{PubMed}   & Citation        & Node          & 1                  & 19,717                & 44,338                & 3                   \\
      \texttt{Arxiv}    & Citation        & Node          & 1                  & 169,343               & 1,166,243             & 40                  \\
      \texttt{WikiCS}   & Web link        & Node          & 1                  & 11,701                & 216,123               & 10                  \\
      \texttt{FB15K237} & Knowledge       & Link          & 1                  & 14,541                & 310,116               & 237                 \\
      \texttt{WN18RR}   & Knowledge       & Link          & 1                  & 40,943                & 93,003                & 11                  \\
      \texttt{PCBA}     & Molecule        & Graph         & 437,929            & 26.0                  & 28.1                  & 128                 \\
      \texttt{HIV}      & Molecule        & Graph         & 41,127             & 25.5                  & 27.5                  & 2                   \\
      \texttt{ChEMBL}   & Molecule        & Graph         & 365,065            & 25.9                  & 55.9                  & 1,048               \\
      \bottomrule
    \end{tabular}
  }

\end{table}

\paragraph{Dataset Statistics.}

We utilize nine datasets from various domains and tasks, as detailed in Table \ref{tab:data stats}. We follow the preprocessing method described in \citep{liu2024one}, using the Sentence Transformer \citep{reimers2019sentence} to convert raw textual descriptions of nodes and edges into 768-dimensional features. It should be noted that for knowledge graphs (KGs), we do not transform edge textual information into edge features, as the textual information already provides sufficient knowledge for KG completion.

\paragraph{Dataset Splitting.}

For Cora and PubMed, we follow the common split setting with 20 labeled nodes per class for training, utilizing a predefined 10 splits with different seeds to report average performance. For WikiCS, we also employ the standard split, reporting average accuracy across 20 different training splits, each with 20 random seeds, and using 5\% of nodes in each class for training. For Arxiv, HIV, and PCBA, we follow the official splits, conducting experiments 10 times with random seeds to determine average accuracy. For WN18RR and FB15K237, we follow the splits outlined in \citet{liu2024one}. Specifically, for FB15K237, the training set comprises 272,115 edges, the validation set 17,535 edges, and the test set 20,466 edges; for WN18RR, the numbers are 86,835, 3,034, and 3,134, respectively. We repeat each experiment 10 times with random seeds and report the average accuracy.

\subsection{Baseline}

We compare \textsc{GFT} against a broad spectrum of baselines, encompassing supervised GNNs, self-supervised GNNs, graph few-shot learning, and graph foundation models.

\paragraph{Supervised GNNs/MLP.}

The supervised approaches include a basic MLP (Linear), GCN \citep{kipf2017semisupervised}, GAT \citep{velickovic2018graph}, and GIN \citep{xu2018how}.

\paragraph{Self-supervised GNNs.}

Our analysis also covers self-supervised methods for graph learning. DGI \citep{velickovic2018deep} utilizes contrastive learning between graph summaries and node patches. BGRL \citep{thakoor2022largescale} employs bootstrapping to predict the same node in different views. GraphMAE \citep{hou2022graphmae} reconstructs node features using structural information. GIANT \citep{chien2021node} combines language models with graph neural networks in a self-supervised fashion, achieving state-of-the-art performance.

\paragraph{Graph Few-shot Learning.}

To assess performance in few-shot learning scenarios, we evaluate \textsc{GFT} alongside methods such as GPN \citep{ding2020graph}, TENT \citep{wang2022task}, GLITTER \citep{wang2022graph}, and TLP \citep{tan2022transductive}. Experimental results are detailed in Appendix \ref{sec:more few-shot}.

\paragraph{Graph Foundation Models.}

We include two primary baselines: Prodigy \citep{huang2023prodigy}, which specializes in pre-training for in-context learning, although it is not applicable in standard pre-training and fine-tuning scenarios. For this model, we pre-train on MAG240M and evaluate performance on Arxiv, and on Wiki for FB15K237. OFA \citep{liu2024one}, in contrast, utilizes language models to align the feature spaces of different graphs and introduces a prompt graph to align task spaces, trained in a supervised manner.

\subsection{Hyper-parameter Setting}

\paragraph{Baselines.}

For the baseline methods, we follow the hyper-parameters reported in \citep{huang2023prodigy,liu2024one}. If specific hyper-parameters for a task are not reported, we set the learning rate to 5e-3 for Cora, PubMed, WikiCS, WN18RR, and HIV, and to 1e-4 for Arxiv, FB15K237, and PCBA. We configure all GNN encoders with two layers, a hidden dimension of 768, and incorporate batch normalization and ReLU activation. AdamW is used as the optimizer with a weight decay of 1e-5. For methods that utilize attention mechanisms, we specify four attention heads.

\paragraph{Pre-training of \textsc{GFT}.}

We configure our model with two layers, each having a dimension of 768, and use ReLU activation complemented by batch normalization. In vector quantization, we set the number of tokens to 128 with each token dimension at 768. We empirically determine the weights for different losses as $\beta_1 = 10$, $\beta_2 = 100$, $\beta_3 = 1$, and $\beta_4 = 0.01$. Additionally, we set the weight for the orthogonal regularizer, $\lambda$, to 1. AdamW is utilized as the optimizer with a learning rate of 1e-4 and a weight decay of 1e-5. The pre-training phase lasts for 25 epochs with a batch size of 1024. For data augmentation, we implement an edge drop rate and a node feature drop rate, both set at 0.2. For topology reconstruction, we selectively reconstruct 10\% of links and choose an equivalent number of negative samples. The sampling factor $\gamma$ for semantic reconstruction is fixed at 1.

\paragraph{Fine-tuning of \textsc{GFT}.}

We detail the hyper-parameters for different datasets in Table \ref{tab:finetune param}. $\lambda_{proto}$ and $\lambda_{lin}$ represent the weights of the losses for the prototype classifier and the linear classifier, respectively.

\begin{table}[!h]
  \centering
  \caption{Hyper-parameters in fine-tuning.}
  \label{tab:finetune param}

  \resizebox{\textwidth}{!}{
    \begin{tabular}{lcccccccc}
      \toprule
      Hyper-parameters                & \texttt{Cora}                               & \texttt{PubMed} & \texttt{Arxiv} & \texttt{Wikics} & \texttt{WN18RR} & \texttt{FB15K237} & \texttt{HIV} & \texttt{PCBA} \\ \midrule
      Learning Rate                   & 5e-4                                        & 5e-3            & 5e-4           & 1e-4            & 1e-3            & 5e-4              & 3e-4         & 1e-3          \\
      \# Epochs                       & 1,000                                       & 1,000           & 1,000          & 2,000           & 1,000           & 3,000             & 100          & 50            \\
      Early Stop                      & 200                                         & 200             & 200            & 500             & 200             & 200               & 20           & 10            \\
      Batch Size                      & 0                                           & 0               & 0              & 0               & 0               & 1,024             & 1,024        & 1,024         \\
      \# Instances per Class in $\sS$ & n/a                                         & n/a             & n/a            & n/a             & n/a             & 50                & 1,500        & 20            \\
      $\tau$ in $f_{proto}$           & \multicolumn{8}{c}{1 used for all datasets}                                                                                                                           \\
      $\tau$ in $f_{lin}$             & \multicolumn{8}{c}{1 used for all datasets}                                                                                                                           \\
      $\lambda_{proto}$               & 1                                           & 0.1             & 1              & 1               & 0.1             & 0.1               & 0.1          & 1             \\
      $\lambda_{lin}$                 & 0.1                                         & 1               & 0.1            & 1               & 1               & 0.1               & 1            & 1             \\
      \bottomrule
    \end{tabular}
  }
\end{table}

\subsection{Running environment}

We utilize an NVIDIA A40 with 48GB GPU memory for all experiments. Both the pre-training and fine-tuning phases can be conducted on a single Nvidia GeForce RTX 3090 with 24GB memory.

\section{Pre-training and Fine-tuning Results with \textit{std}.}
\label{sec:main result full}

We report the average results of the pre-training and fine-tuning settings in the main paper. The model results along with the standard deviations are presented in Table \ref{tab:main result full}.

\begin{table}[!h]
  \caption{Model performance in pre-training and fine-tuning setting with \textit{std}.}
  \label{tab:main result full}
  \resizebox{\linewidth}{!}{
    \begin{tabular}{l  cccccccc  c}
      \toprule
                                         & \multicolumn{4}{c}{Node Classification} & \multicolumn{2}{c}{Link Classification} & \multicolumn{2}{c}{Graph Classification} &                                                                                                                                                                                             \\ \cmidrule(lr){2-5} \cmidrule(lr){6-7} \cmidrule(lr){8-9}
      Method                             & \texttt{Cora}                           & \texttt{PubMed}                         & \texttt{Wiki-CS}                         & \texttt{Arxiv}                 & \texttt{WN18RR}                & \texttt{FB15K237}              & \texttt{HIV}                   & \texttt{PCBA}                  & \textit{\textbf{Avg.}} \\ \midrule\midrule
      Linear                             & 58.03 \tiny{±2.33}                      & 68.66 \tiny{±2.24}                      & 70.36 \tiny{±0.58}                       & 66.50 \tiny{±0.14}             & 78.50 \tiny{±0.59}             & 87.39 \tiny{±0.07}             & 66.37 \tiny{±1.11}             & 72.30 \tiny{±0.34}             & 71.01                  \\
      GCN \citep{kipf2017semisupervised} & 75.65 \tiny{±1.37}                      & \underline{75.61 \tiny{±2.10}}          & 75.28 \tiny{±1.34}                       & \underline{71.40 \tiny{±0.08}} & 73.79 \tiny{±0.39}             & 82.22 \tiny{±0.28}             & 64.84 \tiny{±4.78}             & 71.32 \tiny{±0.49}             & 73.76                  \\
      GAT \citep{velickovic2018graph}    & \underline{76.24 \tiny{±1.62}}          & 74.86 \tiny{±1.87}                      & 76.78 \tiny{±0.78}                       & 70.87 \tiny{±0.24}             & 80.16 \tiny{±0.27}             & \underline{88.93 \tiny{±0.15}} & 65.54 \tiny{±6.93}             & 70.12 \tiny{±0.89}             & \underline{75.44}      \\
      GIN \citep{xu2018how}              & 73.59 \tiny{±2.10}                      & 69.51 \tiny{±6.87}                      & 49.77 \tiny{±4.72}                       & 65.05 \tiny{±0.50}             & 74.02 \tiny{±0.55}             & 83.21 \tiny{±0.53}             & \underline{66.86 \tiny{±3.48}} & \underline{72.69 \tiny{±0.22}} & 69.34                  \\ \midrule
      DGI \citep{velickovic2018deep}     & 72.10 \tiny{±0.34}                      & 73.13 \tiny{±0.64}                      & 75.32 \tiny{±0.95}                       & 69.15 \tiny{±0.20}             & 75.75 \tiny{±0.59}             & 81.34 \tiny{±0.15}             & 59.62 \tiny{±1.21}             & 63.31 \tiny{±0.89}             & 71.22                  \\
      BGRL \citep{thakoor2022largescale} & 71.20 \tiny{±0.30}                      & 75.29 \tiny{±1.33}                      & 76.53 \tiny{±0.69}                       & 71.19 \tiny{±0.18}             & 75.44 \tiny{±0.30}             & 80.66 \tiny{±0.29}             & 63.95 \tiny{±1.06}             & 67.09 \tiny{±1.00}             & 72.67                  \\
      GraphMAE \citep{hou2022graphmae}   & 73.10 \tiny{±0.40}                      & 74.32 \tiny{±0.33}                      & \underline{77.61 \tiny{±0.39}}           & 70.90 \tiny{±0.31}             & 78.99 \tiny{±0.48}             & 85.30 \tiny{±0.16}             & 61.04 \tiny{±0.55}             & 63.30 \tiny{±0.78}             & 73.07                  \\
      GIANT \citep{chien2021node}        & 75.13 \tiny{±0.49}                      & 72.31 \tiny{±0.53}                      & 76.56 \tiny{±0.88}                       & 70.10 \tiny{±0.32}             & \underline{84.36 \tiny{±0.30}} & 87.45 \tiny{±0.54}             & 65.44 \tiny{±1.39}             & 61.49 \tiny{±0.99}             & 74.11                  \\ \midrule
      \rowcolor{Gray}\textsc{GFT}        & \textbf{78.62 \tiny{±1.21}}             & \textbf{77.19 \tiny{±1.99}}             & \textbf{79.39 \tiny{±0.42}}              & \textbf{71.93 \tiny{±0.12}}    & \textbf{91.91 \tiny{±0.34}}    & \textbf{89.72 \tiny{±0.20}}    & \textbf{72.67 \tiny{±1.38}}    & \textbf{77.90 \tiny{±0.64}}    & \textbf{79.92}         \\
      \bottomrule
    \end{tabular}
  }
\end{table}

\section{Additional Few-shot Learning Results}
\label{sec:more few-shot}

We present the extended few-shot learning performance across multiple tables: Table \ref{tab:few-shot arxiv part1}, \ref{tab:few-shot arxiv part2}, \ref{tab:few-shot cora}, \ref{tab:few-shot FB15K237}, \ref{tab:few-shot WN18RR}, \ref{tab:few-shot HIV}, and \ref{tab:few-shot PCBA}. In each run, we sample 20 few-shot tasks to mitigate the impact of randomness. The baselines consist of graph foundation models such as Prodigy \citep{huang2023prodigy} and OFA \citep{liu2024one}, alongside few-shot learning methods including GPN \citep{ding2020graph}, TENT \citep{wang2022task}, GLITTER \citep{wang2022graph}, and TLP \citep{tan2022transductive}. In terms of graph foundation models, we compare \textsc{GFT} to OFA across all datasets and to Prodigy on the Arxiv and FB15K237 datasets only, as Prodigy's application is limited to these datasets by its in-context training strategy. \textsc{GFT} not only significantly enhances performance over Prodigy and OFA but also surpasses a broad range of specialized few-shot learning methods. Furthermore, as the number of fine-tuning instances per class increases, there is a marked improvement in model performance, demonstrating significant adaptability to target tasks. Notably, even with extremely limited training instances, the model substantially outperforms the baselines, showcasing rapid adaptation capabilities.

\begin{table}[!h]
  \centering
  \caption{The few-shot learning performance on \texttt{Arxiv} (Part 1). We use the \textbf{bold} to indicate the performance of best baselines and best our methods. The term ``\# trains'' indicates the number of fine-tuning instances per class.}
  \label{tab:few-shot arxiv part1}

  \resizebox{0.75\textwidth}{!}{
    \begin{tabular}{lcccccc}
      \toprule
                                                   & \multicolumn{3}{c}{5-way}   & \multicolumn{3}{c}{3-way}                                                                                                                             \\ \cmidrule(lr){2-4} \cmidrule(lr){5-7}
      Method                                       & 5-shot                      & 3-shot                      & 1-shot                       & 5-shot                      & 3-shot                      & 1-shot                       \\ \midrule\midrule
      GPN \citep{ding2020graph}                    & 50.53 \tiny{±3.07}          & 48.32 \tiny{±3.80}          & 38.58 \tiny{±1.61}           & 62.25 \tiny{±4.94}          & 58.52 \tiny{±3.00}          & 48.45 \tiny{±5.60}           \\
      TENT \citep{wang2022task}                    & 60.83 \tiny{±7.45}          & 56.03 \tiny{±8.90}          & 45.62 \tiny{±10.70}          & 74.20 \tiny{±9.93}          & 70.48 \tiny{±11.50}         & 59.38 \tiny{±13.55}          \\
      GLITTER \citep{wang2022graph}                & 56.00 \tiny{±4.40}          & 57.44 \tiny{±4.90}          & 47.12 \tiny{±2.73}           & 62.13 \tiny{±10.85}         & 60.93 \tiny{±12.12}         & 59.20 \tiny{±5.48}           \\
      TLP-BGRL \citep{tan2022transductive}         & 50.13 \tiny{±8.78}          & 46.21 \tiny{±7.92}          & 35.81 \tiny{±8.58}           & 62.93 \tiny{±11.74}         & 58.37 \tiny{±11.34}         & 46.30 \tiny{±10.83}          \\
      TLP-SURGL \citep{tan2022transductive}        & \textbf{77.89 \tiny{±6.46}} & \textbf{74.19 \tiny{±7.55}} & \textbf{61.75 \tiny{±10.07}} & \textbf{86.27 \tiny{±7.54}} & \textbf{83.75 \tiny{±8.86}} & \textbf{73.46 \tiny{±12.68}} \\ \midrule
      Prodigy \citep{huang2023prodigy}             & 61.09 \tiny{±5.85}          & 58.64 \tiny{±5.84}          & 48.23 \tiny{±6.18}           & 73.64 \tiny{±6.93}          & 71.43 \tiny{±7.28}          & 61.59 \tiny{±8.53}           \\
      OFA \citep{liu2024one}                       & 59.92 \tiny{±1.32}          & 58.68 \tiny{±6.40}          & 52.80 \tiny{±3.94}           & 72.18 \tiny{±3.33}          & 71.80 \tiny{±1.59}          & 60.47 \tiny{±2.65}           \\ \midrule
      \rowcolor{Gray} \textsc{GFT} (\# train = 5)  & 68.00 \tiny{±1.89}          & 66.00 \tiny{±2.53}          & 58.20 \tiny{±4.15}           & 78.56 \tiny{±4.02}          & 74.00 \tiny{±3.19}          & 66.22 \tiny{±4.12}           \\
      \rowcolor{Gray} \textsc{GFT} (\# train = 10) & 72.40 \tiny{±3.60}          & 71.73 \tiny{±2.86}          & 62.40 \tiny{±2.64}           & 78.78 \tiny{±1.19}          & 76.22 \tiny{±4.21}          & 69.89 \tiny{±3.76}           \\
      \rowcolor{Gray} \textsc{GFT} (\# train = 20) & 73.13 \tiny{±2.80}          & 71.67 \tiny{±2.43}          & 64.20 \tiny{±2.07}           & \textbf{80.67 \tiny{±2.34}} & \textbf{79.33 \tiny{±2.34}} & 72.89 \tiny{±3.01}           \\
      \rowcolor{Gray} \textsc{GFT} (\# train = 30) & \textbf{74.67 \tiny{±2.96}} & \textbf{73.27 \tiny{±3.68}} & \textbf{65.13 \tiny{±3.76}}  & 79.56 \tiny{±2.59}          & 76.78 \tiny{±2.69}          & \textbf{73.22 \tiny{±3.40}}  \\
      \bottomrule
    \end{tabular}
  }
\end{table}

\begin{table}[!h]
  \centering
  \caption{The few-shot learning performance on \texttt{Arxiv} (Part 2). }
  \label{tab:few-shot arxiv part2}

  \resizebox{\textwidth}{!}{
    \begin{tabular}{lccccccccc}
      \toprule
                                                   & \multicolumn{3}{c}{40-way}  & \multicolumn{3}{c}{20-way}  & \multicolumn{3}{c}{10-way}                                                                                                                                                                                      \\ \cmidrule(lr){2-4} \cmidrule(lr){5-7} \cmidrule(lr){8-10}
      Method                                       & 5-shot                      & 3-shot                      & 1-shot                      & 5-shot                      & 3-shot                      & 1-shot                      & 5-shot                      & 3-shot                      & 1-shot                      \\ \midrule\midrule
      Prodigy \citep{huang2023prodigy}             & \textbf{25.51 \tiny{±0.14}} & \textbf{23.69 \tiny{±0.07}} & \textbf{21.44 \tiny{±0.22}} & 34.29 \tiny{±0.43}          & 31.30 \tiny{±0.68}          & 29.21 \tiny{±0.99}          & \textbf{50.84 \tiny{±1.79}} & 47.39 \tiny{±2.99}          & \textbf{41.05 \tiny{±6.30}} \\
      OFA \citep{liu2024one}                       & 24.01 \tiny{±0.58}          & 22.13 \tiny{±0.89}          & 21.34 \tiny{±1.30}          & \textbf{36.33 \tiny{±0.48}} & \textbf{32.56 \tiny{±0.19}} & \textbf{29.40 \tiny{±1.21}} & 49.59 \tiny{±2.67}          & \textbf{48.11 \tiny{±3.69}} & 39.53 \tiny{±5.39}          \\ \midrule
      \rowcolor{Gray} \textsc{GFT} (\# train = 5)  & 36.29 \tiny{±1.04}          & 34.36 \tiny{±0.95}          & 26.49 \tiny{±1.11}          & 45.75 \tiny{±1.10}          & 42.58 \tiny{±1.17}          & 35.02 \tiny{±1.00}          & 56.43 \tiny{±3.45}          & 52.43 \tiny{±1.13}          & 44.40 \tiny{±2.60}          \\
      \rowcolor{Gray} \textsc{GFT} (\# train = 10) & 41.83 \tiny{±0.86}          & 39.10 \tiny{±1.87}          & 30.82 \tiny{±0.59}          & 49.65 \tiny{±1.98}          & 46.85 \tiny{±1.46}          & 40.95 \tiny{±1.78}          & 60.20 \tiny{±1.12}          & 57.57 \tiny{±1.56}          & 48.07 \tiny{±2.50}          \\
      \rowcolor{Gray} \textsc{GFT} (\# train = 20) & 45.07 \tiny{±1.23}          & 43.90 \tiny{±1.41}          & 35.02 \tiny{±1.45}          & 53.25 \tiny{±1.37}          & 50.85 \tiny{±1.74}          & 42.97 \tiny{±1.83}          & 63.47 \tiny{±1.57}          & 61.37 \tiny{±2.99}          & 53.23 \tiny{±1.82}          \\
      \rowcolor{Gray} \textsc{GFT} (\# train = 30) & \textbf{46.68 \tiny{±1.11}} & \textbf{44.60 \tiny{±1.04}} & \textbf{35.88 \tiny{±1.24}} & \textbf{53.97 \tiny{±1.41}} & \textbf{51.85 \tiny{±1.17}} & \textbf{43.75 \tiny{±1.87}} & \textbf{64.43 \tiny{±1.25}} & \textbf{62.77 \tiny{±0.88}} & \textbf{54.73 \tiny{±1.55}} \\
      \bottomrule
    \end{tabular}
  }
\end{table}

\begin{table}[!h]
  \centering
  \caption{The few-shot learning performance on \texttt{Cora}. }
  \label{tab:few-shot cora}

  \resizebox{\textwidth}{!}{
    \begin{tabular}{lccccccccc}
      \toprule
                                                   & \multicolumn{3}{c}{7-way}   & \multicolumn{3}{c}{5-way}   & \multicolumn{3}{c}{2-way}                                                                                                                                                                                       \\  \cmidrule(lr){2-4} \cmidrule(lr){5-7} \cmidrule(lr){8-10}
      Method                                       & 5-shot                      & 3-shot                      & 1-shot                      & 5-shot                      & 3-shot                      & 1-shot                      & 5-shot                      & 3-shot                      & 1-shot                      \\  \midrule\midrule
      GPN \citep{ding2020graph}                    & --                          & --                          & --                          & --                          & --                          & --                          & 63.83 \tiny{±2.86}          & --                          & 56.09 \tiny{±2.08}          \\
      TENT \citep{wang2022task}                    & --                          & --                          & --                          & --                          & --                          & --                          & 58.97 \tiny{±2.40}          & --                          & 54.33 \tiny{±2.10}          \\
      TLP-BGRL \citep{tan2022transductive}         & --                          & --                          & --                          & --                          & --                          & --                          & 81.31 \tiny{±1.89}          & --                          & 59.16 \tiny{±2.48}          \\
      TLP-SURGL \citep{tan2022transductive}        & --                          & --                          & --                          & --                          & --                          & --                          & \textbf{92.49 \tiny{±1.02}} & --                          & \textbf{81.52 \tiny{±2.09}} \\ \midrule
      OFA \citep{liu2024one}                       & \textbf{32.10 \tiny{±1.79}} & \textbf{36.03 \tiny{±2.11}} & \textbf{30.38 \tiny{±2.39}} & \textbf{42.28 \tiny{±2.35}} & \textbf{31.28 \tiny{±2.63}} & \textbf{23.68 \tiny{±1.67}} & 72.20 \tiny{±3.82}          & \textbf{62.22 \tiny{±1.17}} & 51.85 \tiny{±4.35}          \\ \midrule
      \rowcolor{Gray} \textsc{GFT} (\# train = 1)  & 43.55 \tiny{±7.43}          & 43.31 \tiny{±8.11}          & 41.40 \tiny{±8.04}          & 52.30 \tiny{±6.57}          & 51.47 \tiny{±6.33}          & 49.80 \tiny{±6.79}          & 75.00 \tiny{±4.08}          & 76.33 \tiny{±3.56}          & 72.92 \tiny{±4.64}          \\
      \rowcolor{Gray} \textsc{GFT} (\# train = 2)  & 56.48 \tiny{±3.54}          & 55.90 \tiny{±3.49}          & 53.64 \tiny{±4.52}          & 63.67 \tiny{±3.44}          & 62.40 \tiny{±4.36}          & 60.47 \tiny{±4.62}          & 82.25 \tiny{±4.01}          & 81.67 \tiny{±3.75}          & 78.00 \tiny{±6.29}          \\
      \rowcolor{Gray} \textsc{GFT} (\# train = 5)  & 67.36 \tiny{±4.31}          & 67.29 \tiny{±4.39}          & 66.10 \tiny{±4.39}          & 74.10 \tiny{±4.26}          & 74.37 \tiny{±4.53}          & 72.70 \tiny{±4.87}          & 87.00 \tiny{±3.36}          & 86.00 \tiny{±3.33}          & 86.00 \tiny{±3.35}          \\
      \rowcolor{Gray} \textsc{GFT} (\# train = 10) & \textbf{74.02 \tiny{±3.90}} & \textbf{74.26 \tiny{±3.70}} & \textbf{72.55 \tiny{±3.80}} & \textbf{78.53 \tiny{±3.02}} & \textbf{78.90 \tiny{±2.62}} & \textbf{76.87 \tiny{±2.19}} & \textbf{87.92 \tiny{±2.89}} & \textbf{88.50 \tiny{±2.38}} & \textbf{88.42 \tiny{±2.90}} \\
      \bottomrule
    \end{tabular}
  }
\end{table}

\begin{table}[!h]
  \centering
  \caption{The few-shot learning performance on \texttt{FB15K237}. }
  \label{tab:few-shot FB15K237}

  \resizebox{\textwidth}{!}{
    \begin{tabular}{lccccccccc}
      \toprule
                                                   & \multicolumn{3}{c}{40-way}  & \multicolumn{3}{c}{10-way}  & \multicolumn{3}{c}{5-way}                                                                                                                                                                                       \\ \cmidrule(lr){2-4} \cmidrule(lr){5-7} \cmidrule(lr){8-10}
      Method                                       & 5-shot                      & 3-shot                      & 1-shot                      & 5-shot                      & 3-shot                      & 1-shot                      & 5-shot                      & 3-shot                      & 1-shot                      \\ \midrule\midrule
      Prodigy \citep{huang2023prodigy}             & 62.03 \tiny{±0.59}          & 59.58 \tiny{±0.22}          & 54.30 \tiny{±0.69}          & \textbf{84.30 \tiny{±7.80}} & 79.61 \tiny{±8.28}          & 66.10 \tiny{±9.89}          & 88.05 \tiny{±0.68}          & 88.02 \tiny{±0.48}          & 87.59 \tiny{±0.84}          \\
      OFA \citep{liu2024one}                       & \textbf{66.51 \tiny{±0.31}} & \textbf{65.76 \tiny{±0.54}} & \textbf{63.48 \tiny{±0.90}} & 83.64 \tiny{±6.23}          & \textbf{83.14 \tiny{±1.54}} & \textbf{83.46 \tiny{±4.12}} & \textbf{91.43 \tiny{±0.55}} & \textbf{91.12 \tiny{±0.73}} & \textbf{91.01 \tiny{±0.95}} \\ \midrule
      \rowcolor{Gray} \textsc{GFT} (\# train = 10) & 61.12 \tiny{±1.64}          & 61.48 \tiny{±1.32}          & 60.79 \tiny{±1.41}          & 78.83 \tiny{±1.80}          & 79.13 \tiny{±1.57}          & 79.17 \tiny{±1.76}          & 86.27 \tiny{±1.10}          & 86.00 \tiny{±1.84}          & 87.67 \tiny{±0.89}          \\
      \rowcolor{Gray} \textsc{GFT} (\# train = 20) & 70.36 \tiny{±1.73}          & 70.56 \tiny{±2.12}          & 70.19 \tiny{±1.44}          & 85.40 \tiny{±2.10}          & 85.57 \tiny{±1.29}          & 85.93 \tiny{±1.48}          & 91.80 \tiny{±1.07}          & 91.80 \tiny{±0.62}          & 91.80 \tiny{±1.54}          \\
      \rowcolor{Gray} \textsc{GFT} (\# train = 30) & \textbf{75.01 \tiny{±1.03}} & \textbf{74.56 \tiny{±0.65}} & \textbf{74.97 \tiny{±0.91}} & \textbf{89.13 \tiny{±1.68}} & \textbf{88.53 \tiny{±2.23}} & \textbf{88.07 \tiny{±1.39}} & \textbf{91.93 \tiny{±1.24}} & \textbf{92.27 \tiny{±1.93}} & \textbf{92.40 \tiny{±1.29}} \\
      \bottomrule
    \end{tabular}
  }
\end{table}

\begin{table}[!h]
  \centering
  \caption{The few-shot learning performance on \texttt{WN18RR}. }
  \label{tab:few-shot WN18RR}

  \resizebox{\textwidth}{!}{
    \begin{tabular}{lccccccccc}
      \toprule
                                                   & \multicolumn{3}{c}{10-way}  & \multicolumn{3}{c}{5-way}   & \multicolumn{3}{c}{3-way}                                                                                                                                                                                       \\ \cmidrule(lr){2-4} \cmidrule(lr){5-7} \cmidrule(lr){8-10}
      Method                                       & 5-shot                      & 3-shot                      & 1-shot                      & 5-shot                      & 3-shot                      & 1-shot                      & 5-shot                      & 3-shot                      & 1-shot                      \\ \midrule\midrule
      OFA \citep{liu2024one}                       & \textbf{32.64 \tiny{±1.56}} & \textbf{30.56 \tiny{±1.02}} & \textbf{25.82 \tiny{±1.07}} & \textbf{48.32 \tiny{±3.19}} & \textbf{45.04 \tiny{±2.39}} & \textbf{34.40 \tiny{±1.47}} & \textbf{60.72 \tiny{±3.82}} & \textbf{61.29 \tiny{±2.56}} & \textbf{51.77 \tiny{±2.65}} \\ \midrule
      \rowcolor{Gray} \textsc{GFT} (\# train = 1)  & 35.50 \tiny{±4.59}          & 35.50 \tiny{±5.02}          & 35.33 \tiny{±4.20}          & 48.80 \tiny{±3.61}          & 48.53 \tiny{±3.68}          & 48.13 \tiny{±4.37}          & 62.56 \tiny{±2.71}          & 60.67 \tiny{±3.93}          & 58.44 \tiny{±3.84}          \\
      \rowcolor{Gray} \textsc{GFT} (\# train = 2)  & 42.43 \tiny{±3.07}          & 42.50 \tiny{±2.88}          & 42.00 \tiny{±3.00}          & 55.87 \tiny{±2.63}          & 54.80 \tiny{±2.32}          & 54.40 \tiny{±2.22}          & 66.33 \tiny{±1.91}          & 66.44 \tiny{±1.67}          & 64.89 \tiny{±3.32}          \\
      \rowcolor{Gray} \textsc{GFT} (\# train = 5)  & 44.83 \tiny{±2.88}          & 44.90 \tiny{±3.07}          & 44.77 \tiny{±3.50}          & 58.00 \tiny{±2.61}          & 57.73 \tiny{±2.18}          & 57.40 \tiny{±2.53}          & 68.89 \tiny{±2.19}          & 69.67 \tiny{±2.10}          & 68.78 \tiny{±0.96}          \\
      \rowcolor{Gray} \textsc{GFT} (\# train = 10) & \textbf{51.20 \tiny{±4.57}} & \textbf{51.30 \tiny{±4.84}} & \textbf{50.87 \tiny{±4.15}} & \textbf{62.67 \tiny{±4.90}} & \textbf{63.00 \tiny{±5.89}} & \textbf{63.60 \tiny{±6.33}} & \textbf{72.22 \tiny{±3.70}} & \textbf{72.56 \tiny{±4.85}} & \textbf{72.56 \tiny{±4.75}} \\
      \bottomrule
    \end{tabular}
  }
\end{table}

\begin{table}[!h]
  \centering
  \caption{The few-shot learning performance on \texttt{HIV}. }
  \label{tab:few-shot HIV}

  \resizebox{0.6\textwidth}{!}{
    \begin{tabular}{lcccc}
      \toprule
                                                   & \multicolumn{4}{c}{2-way}                                                                                             \\ \cmidrule(lr){2-5}
      Method                                       & 10-shot                     & 5-shot                      & 3-shot                      & 1-shot                      \\ \midrule\midrule
      OFA \citep{liu2024one}                       & \textbf{54.36 \tiny{±4.90}} & \textbf{57.56 \tiny{±3.66}} & \textbf{59.30 \tiny{±3.04}} & \textbf{57.17 \tiny{±1.82}} \\ \midrule
      \rowcolor{Gray} \textsc{GFT} (\# train = 10) & 53.22 \tiny{±11.79}         & 54.17 \tiny{±10.64}         & 57.61 \tiny{±10.53}         & 58.28 \tiny{±9.14}          \\
      \rowcolor{Gray} \textsc{GFT} (\# train = 20) & \textbf{58.67 \tiny{±7.54}} & \textbf{58.78 \tiny{±6.92}} & \textbf{58.44 \tiny{±7.28}} & \textbf{59.94 \tiny{±7.09}} \\
      \rowcolor{Gray} \textsc{GFT} (\# train = 30) & 58.11 \tiny{±5.27}          & 58.56 \tiny{±5.14}          & 58.33 \tiny{±5.42}          & 59.11 \tiny{±5.16}          \\
      \bottomrule
    \end{tabular}
  }
\end{table}

\begin{table}[!h]
  \centering
  \caption{The few-shot learning performance on \texttt{PCBA}. }
  \label{tab:few-shot PCBA}

  \resizebox{0.6\textwidth}{!}{
    \begin{tabular}{lcccc}
      \toprule
                                                   & \multicolumn{4}{c}{2-way}                                                                                             \\ \cmidrule(lr){2-5}
      Method                                       & 10-shot                     & 5-shot                      & 3-shot                      & 1-shot                      \\ \midrule\midrule
      OFA \citep{liu2024one}                       & \textbf{54.58 \tiny{±2.90}} & \textbf{54.80 \tiny{±3.75}} & \textbf{54.67 \tiny{±4.35}} & \textbf{54.92 \tiny{±4.38}} \\ \midrule
      \rowcolor{Gray} \textsc{GFT} (\# train = 10) & 55.98 \tiny{±1.39}          & 56.13 \tiny{±1.32}          & \textbf{56.29 \tiny{±1.31}} & \textbf{56.39 \tiny{±1.49}} \\
      \rowcolor{Gray} \textsc{GFT} (\# train = 20) & \textbf{59.34 \tiny{±6.99}} & \textbf{59.34 \tiny{±6.77}} & 55.72 \tiny{±5.92}          & 55.88 \tiny{±5.64}          \\
      \rowcolor{Gray} \textsc{GFT} (\# train = 30) & 54.81 \tiny{±2.23}          & 55.14 \tiny{±2.22}          & 55.18 \tiny{±2.15}          & 55.33 \tiny{±1.90}          \\
      \bottomrule
    \end{tabular}
  }
\end{table}

\section{Complete Ablation Study}
\label{sec:complete ablation}

\subsection{Pre-training Datasets}
\label{sec:pretrain data full}

The complete results detailing the impact of different pre-training datasets are presented in Table \ref{tab:ablation pretrain data full}, which serves as a complement to Table \ref{tab:pretrain datasets results}. We explore three variants: (1) pre-training on all datasets, (2) pre-training solely on the target dataset, and (3) pre-training on the remaining datasets. Our observations suggest that using only the target graph can still achieve desirable performance, as it provides graph-specific information without spurious noise. More importantly, performance improves significantly when the target graph is excluded and the remaining datasets are utilized. We hypothesize that observing more computation trees generally enhances model performance. Even without the target graph, the presence of numerous computation trees shared across various domains provides sufficient information. Moreover, using all datasets typically yields the best performance, as it offers a more comprehensive approximation of the computation tree distribution.

\begin{table}[!h]
  \caption{Complete results of the ablation study on pre-training datasets.}
  \label{tab:ablation pretrain data full}

  \resizebox{\textwidth}{!}{
    \begin{tabular}{lccccccccc}
      \toprule
                                   & \multicolumn{4}{c}{Node Classification} & \multicolumn{2}{c}{Link Classification} & \multicolumn{2}{c}{Graph Classification} &                                                                                                                                                                              \\ \cmidrule(lr){2-5} \cmidrule(lr){6-7} \cmidrule(lr){8-9}
      Variant                      & \texttt{Cora}                           & \texttt{PubMed}                         & \texttt{Wiki-CS}                         & \texttt{Arxiv}              & \texttt{WN18RR}             & \texttt{FB15K237}           & \texttt{HIV}                & \texttt{PCBA}               & \textit{\textbf{Avg.}} \\ \midrule\midrule
      \rowcolor{Gray} All Datasets & \textbf{78.62 \tiny{±1.21}}             & \textbf{77.19 \tiny{±1.99}}             & \textbf{79.39 \tiny{±0.42}}              & \textbf{71.93 \tiny{±0.12}} & \textbf{91.91 \tiny{±0.34}} & \textbf{89.72 \tiny{±0.20}} & \textbf{72.67 \tiny{±1.38}} & \textbf{77.90 \tiny{±0.64}} & \textbf{79.92}         \\ \midrule
      Target Dataset               & 78.05 \tiny{±1.44}                      & 76.22 \tiny{±1.06}                      & 78.67 \tiny{±2.02}                       & 71.54 \tiny{±0.50}          & 91.67 \tiny{±0.32}          & 89.67 \tiny{±0.23}          & 71.05 \tiny{±2.61}          & 77.10 \tiny{±0.36}          & 79.25                  \\
      Remaining Datasets           & 77.60 \tiny{±1.07}                      & 75.73 \tiny{±1.63}                      & 78.94 \tiny{±0.89}                       & 71.47 \tiny{±0.22}          & 91.72 \tiny{±0.18}          & 89.70 \tiny{±0.22}          & 72.28 \tiny{±1.83}          & 77.44 \tiny{±0.27}          & 79.36                  \\
      \bottomrule
    \end{tabular}
  }
\end{table}

\subsection{Pre-training Tasks}
\label{sec:pretrain task full}

Table \ref{tab:ablation pretrain task full} presents complete results of model performance across different pre-training tasks, serving as a complement to Table \ref{tab:ablation}. The observations align fully with those in Table \ref{tab:ablation}, demonstrating that all reconstruction tasks enhance model performance compared to models without pre-training. Optimal performance is achieved when three tasks are jointly optimized.

\begin{table}[!h]
  \caption{Complete results of the ablation study on pre-training tasks.}
  \label{tab:ablation pretrain task full}

  \resizebox{\textwidth}{!}{
    \begin{tabular}{lccccccccc}
      \toprule
                                & \multicolumn{4}{c}{Node Classification} & \multicolumn{2}{c}{Link Classification} & \multicolumn{2}{c}{Graph Classification} &                                                                                                                                                                              \\ \cmidrule(lr){2-5} \cmidrule(lr){6-7} \cmidrule(lr){8-9}
      Variant                   & \texttt{Cora}                           & \texttt{PubMed}                         & \texttt{Wiki-CS}                         & \texttt{Arxiv}              & \texttt{WN18RR}             & \texttt{FB15K237}           & \texttt{HIV}                & \texttt{PCBA}               & \textit{\textbf{Avg.}} \\ \midrule\midrule
      n/a                       & 77.89 \tiny{±2.04}                      & 76.11 \tiny{±1.37}                      & 70.98 \tiny{±4.15}                       & 65.11 \tiny{±1.41}          & 91.13 \tiny{±0.32}          & 3.46 \tiny{±2.95}           & 71.67 \tiny{±2.71}          & 75.99 \tiny{±0.56}          & 66.54                  \\ \midrule
      w. $\gL_{sem}$            & 78.20 \tiny{±1.72}                      & 75.96 \tiny{±1.53}                      & 79.16 \tiny{±0.91}                       & 71.68 \tiny{±0.16}          & 91.47 \tiny{±0.31}          & 89.31 \tiny{±0.22}          & 72.63 \tiny{±1.81}          & 77.35 \tiny{±0.10}          & 79.47                  \\
      w. $\gL_{feat}$           & 77.10 \tiny{±1.68}                      & 75.85 \tiny{±1.49}                      & 79.29 \tiny{±0.58}                       & 71.16 \tiny{±0.32}          & 91.44 \tiny{±0.52}          & 89.39 \tiny{±0.15}          & 71.55 \tiny{±1.82}          & 77.29 \tiny{±0.18}          & 79.13                  \\
      w. $\gL_{topo}$           & 76.42 \tiny{±1.49}                      & 75.82 \tiny{±1.24}                      & 77.95 \tiny{±0.63}                       & 71.81 \tiny{±0.34}          & 91.08 \tiny{±0.54}          & 89.48 \tiny{±0.09}          & 72.02 \tiny{±1.62}          & 77.11 \tiny{±0.26}          & 78.96                  \\ \midrule
      \rowcolor{Gray} All Tasks & \textbf{78.62 \tiny{±1.21}}             & \textbf{77.19 \tiny{±1.99}}             & \textbf{79.39 \tiny{±0.42}}              & \textbf{71.93 \tiny{±0.12}} & \textbf{91.91 \tiny{±0.34}} & \textbf{89.72 \tiny{±0.20}} & \textbf{72.67 \tiny{±1.38}} & \textbf{77.90 \tiny{±0.64}} & \textbf{79.92}         \\
      \bottomrule
    \end{tabular}
  }
\end{table}

\subsection{Strategies for Enhancing Tree Vocabulary}
\label{sec:vocab quality ablation}

Table \ref{tab:ablation vocab quality} presents the ablation study on strategies for enhancing the quality of tree vocabulary, as described in Section \ref{sec:pretrain}. As previously stated, both the comprehension and expressiveness of the tree vocabulary are critical properties for its effectiveness, achieved through augmentation (aug.) and an orthogonal regularizer (ortho. reg.), respectively. We note that removing any component results in a degradation in model performance.

\begin{table}[!h]
  \caption{Complete results of the ablation study on strategies for enhancing tree vocabulary.}
  \label{tab:ablation vocab quality}

  \resizebox{\textwidth}{!}{
    \begin{tabular}{lccccccccc}
      \toprule
                                    & \multicolumn{4}{c}{Node Classification} & \multicolumn{2}{c}{Link Classification} & \multicolumn{2}{c}{Graph Classification} &                                                                                                                                                                              \\ \cmidrule(lr){2-5} \cmidrule(lr){6-7} \cmidrule(lr){8-9}
      Variant                       & \texttt{Cora}                           & \texttt{PubMed}                         & \texttt{Wiki-CS}                         & \texttt{Arxiv}              & \texttt{WN18RR}             & \texttt{FB15K237}           & \texttt{HIV}                & \texttt{PCBA}               & \textit{\textbf{Avg.}} \\ \midrule\midrule
      \rowcolor{Gray}All Components & \textbf{78.62 \tiny{±1.21}}             & \textbf{77.19 \tiny{±1.99}}             & \textbf{79.39 \tiny{±0.42}}              & \textbf{71.93 \tiny{±0.12}} & \textbf{91.91 \tiny{±0.34}} & \textbf{89.72 \tiny{±0.20}} & \textbf{72.67 \tiny{±1.38}} & \textbf{77.90 \tiny{±0.64}} & \textbf{79.92}         \\ \midrule
      w/o Aug.                      & 77.44 \tiny{±1.35}                      & 76.00 \tiny{±1.80}                      & 78.76 \tiny{±0.73}                       & 71.50 \tiny{±0.34}          & 91.11 \tiny{±0.38}          & 89.07 \tiny{±0.21}          & 71.54 \tiny{±2.89}          & 77.50 \tiny{±0.15}          & 79.12                  \\
      w/o Ortho. Reg.               & 77.64 \tiny{±1.91}                      & 76.94 \tiny{±2.01}                      & 78.81 \tiny{±0.78}                       & 71.44 \tiny{±0.43}          & 91.23 \tiny{±0.43}          & 89.29 \tiny{±0.22}          & 70.84 \tiny{±2.91}          & 77.04 \tiny{±0.31}          & 79.15                  \\
      \bottomrule
    \end{tabular}
  }
\end{table}

\subsection{Fine-tuning Tasks}
\label{sec:finetune task full}

We analyze the impact of different computation tree classification tasks for fine-tuning. The complete results are presented in Table \ref{tab:ablation finetune task full}, which complements Table \ref{tab:ablation}. Specifically, \textsc{GFT} employs both a linear classifier and a prototype classifier to utilize information from various levels of the tree. The prototype classifier excels in node-level tasks, while the linear classifier performs better in the other two tasks. However, combining these two methods yields the best performance.

\begin{table}[!h]
  \caption{Complete results of the ablation study on fine-tuning tasks.}
  \label{tab:ablation finetune task full}

  \resizebox{\textwidth}{!}{
    \begin{tabular}{lccccccccc}
      \toprule
                                & \multicolumn{4}{c}{Node Classification} & \multicolumn{2}{c}{Link Classification} & \multicolumn{2}{c}{Graph Classification} &                                                                                                                                                                              \\ \cmidrule(lr){2-5} \cmidrule(lr){6-7} \cmidrule(lr){8-9}
      Variant                   & \texttt{Cora}                           & \texttt{PubMed}                         & \texttt{Wiki-CS}                         & \texttt{Arxiv}              & \texttt{WN18RR}             & \texttt{FB15K237}           & \texttt{HIV}                & \texttt{PCBA}               & \textit{\textbf{Avg.}} \\ \midrule\midrule
      w. $f_{proto}$            & 78.53 \tiny{±1.21}                      & 77.13 \tiny{±1.40}                      & 79.36 \tiny{±0.54}                       & 71.53 \tiny{±0.41}          & 76.76 \tiny{±1.01}          & 0.65 \tiny{±0.00}           & 56.21 \tiny{±4.24}          & 63.56 \tiny{±1.06}          & 62.97                  \\
      w. $f_{lin}$              & 76.52 \tiny{±3.50}                      & 76.82 \tiny{±1.96}                      & 78.35 \tiny{±1.07}                       & 70.36 \tiny{±0.57}          & 90.05 \tiny{±2.34}          & 87.93 \tiny{±1.60}          & 69.06 \tiny{±4.98}          & 75.36 \tiny{±2.86}          & 78.06                  \\ \midrule
      \rowcolor{Gray} All Tasks & \textbf{78.62 \tiny{±1.21}}             & \textbf{77.19 \tiny{±1.99}}             & \textbf{79.39 \tiny{±0.42}}              & \textbf{71.93 \tiny{±0.12}} & \textbf{91.91 \tiny{±0.34}} & \textbf{89.72 \tiny{±0.20}} & \textbf{72.67 \tiny{±1.38}} & \textbf{77.90 \tiny{±0.64}} & \textbf{79.92}         \\
      \bottomrule
    \end{tabular}
  }
\end{table}

\begin{table}[!h]
  \caption{Complete results of the ablation study on tree vocabulary.}
  \label{tab:ablation code full}

  \resizebox{\textwidth}{!}{
    \begin{tabular}{lccccccccc}
      \toprule
                                      & \multicolumn{4}{c}{Node Classification} & \multicolumn{2}{c}{Link Classification} & \multicolumn{2}{c}{Graph Classification} &                                                                                                                                                                              \\ \cmidrule(lr){2-5} \cmidrule(lr){6-7} \cmidrule(lr){8-9}
      Variant                         & \texttt{Cora}                           & \texttt{PubMed}                         & \texttt{Wiki-CS}                         & \texttt{Arxiv}              & \texttt{WN18RR}             & \texttt{FB15K237}           & \texttt{HIV}                & \texttt{PCBA}               & \textit{\textbf{Avg.}} \\ \midrule\midrule
      \rowcolor{Gray} \# Tokens = 128 & 78.62 \tiny{±1.21}                      & 77.19 \tiny{±1.99}                      & 79.39 \tiny{±0.42}                       & 71.93 \tiny{±0.12}          & \textbf{91.91 \tiny{±0.34}} & 89.72 \tiny{±0.20}          & \textbf{72.67 \tiny{±1.38}} & 77.90 \tiny{±0.64}          & 79.92                  \\
      \# Tokens = 256                 & 78.24 \tiny{±1.65}                      & \textbf{77.26 \tiny{±1.93}}             & 79.31 \tiny{±0.70}                       & 72.02 \tiny{±0.28}          & 91.90 \tiny{±0.32}          & 89.82 \tiny{±0.21}          & 72.22 \tiny{±1.80}          & \textbf{78.11 \tiny{±0.26}} & 79.86                  \\
      \# Tokens = 512                 & \textbf{78.86 \tiny{±1.22}}             & 77.17 \tiny{±1.20}                      & \textbf{79.54 \tiny{±0.52}}              & \textbf{72.19 \tiny{±0.20}} & 91.75 \tiny{±0.27}          & \textbf{89.96 \tiny{±0.15}} & 72.36 \tiny{±0.92}          & 78.06 \tiny{±0.38}          & \textbf{79.99}         \\ \midrule
      \quad w/o. Vocab.               & 78.27 \tiny{±1.32}                      & 76.43 \tiny{±2.59}                      & 78.06 \tiny{±0.38}                       & 70.82 \tiny{±0.08}          & 81.52 \tiny{±0.15}          & 91.87 \tiny{±0.05}          & 56.12 \tiny{±7.44}          & 82.21 \tiny{±0.13}          & 76.91                  \\
      \bottomrule
    \end{tabular}
  }
\end{table}

\subsection{Tree Vocabulary}

Table \ref{tab:ablation code full} presents a detailed analysis of model performance with different numbers of tokens and without utilizing the tree vocabulary, complementing Table \ref{tab:ablation code}. Although increasing the number of codes can enhance performance to a certain extent, it is not necessarily effective in all scenarios. Specifically, in only four out of eight scenarios, the maximum number of codes (512 tokens) yields the best results. This observation is consistent with our theoretical analysis, suggesting that more codes may increase the upper bound of generalization error, potentially due to overfitting risks. Furthermore, this phenomenon might also be attributed to the limited diversity of datasets; the eight utilized datasets originate from only four domains (citation, web link, knowledge graphs, and molecules). For these domains, a smaller number of codes may suffice. We hypothesize expanding the number of domains might necessitate more codes; we intend to explore in future work. Regarding the variant that does not use vocabulary, we bypass vocabulary training during pre-training, and directly append a linear classifier behind the GNN for classification during fine-tuning. The results indicate that using the vocabulary significantly enhances model performance, particularly in link- and graph-level tasks, aligning with our theoretical considerations regarding the generalizability of tree tokens.

\end{document}